\documentclass{article}
\usepackage{algorithm,algorithmic}
\usepackage[labelfont=it,margin=.5cm,]{caption}
\usepackage{siunitx,  
            booktabs}  

\usepackage{amsmath, amsthm, amssymb, graphicx}
\allowdisplaybreaks

\usepackage{multirow}

\usepackage{times}
\usepackage{sectsty}
\sectionfont{\fontsize{12}{15}\selectfont}
\subsectionfont{\fontsize{10}{13}\selectfont}

\newcommand{\CH}{\text{CH}}
\newcommand{\F}{{\mathcal{F}}}

\newcommand{\zetaB}{\mbox{\boldmath$\zeta$\unboldmath}}

\newcommand{\EBB}{\mathbb{E}}
\newcommand{\PBB}{\mathbb{P}}
\newcommand{\FM}{\mathcal{F}}
\newcommand{\LM}{\mathcal{L}}

\newcommand{\g}{\mathbf{g}}

\newtheorem{theorem}{Theorem} 
\newtheorem{assumption}{Assumption} 
\newtheorem{claim}[theorem]{Claim}

\newtheorem{proposition}[theorem]{Proposition}
\newtheorem{lemma}[theorem]{Lemma}

\newtheorem{definition}[theorem]{Definition}

\def\X{{\mathcal X}}

\def\Y{{\mathcal Y}}
\def\A{{\mathcal A}}



\def\D{{\mathcal D}}
\def\X{{\mathcal X}}

\def\Y{{\mathcal Y}}

\newcommand{\y}{\ensuremath{\mathbf y}}

\newcommand{\K}{\ensuremath{\mathcal K}}

\def\x{\mathbf{x}}
\def\y{\mathbf{y}}

\def\lhat{\hat{\ell}}

\newcommand{\ignore}[1]{}



\DeclareMathOperator*{\argmin}{arg\,min}





\newcommand{\E}{\mathbb{E}}

\newcommand{\reals}{\mathbb{R}}

\renewcommand{\leq}{~\le~}

\let\oldtfrac\tfrac
\renewcommand{\tfrac}[2]{\smash{\oldtfrac{#1}{#2}}}

\let\nablaold\nabla
\renewcommand{\nabla}{\nablaold\mkern-2.5mu}







     \usepackage[nonatbib,preprint]{neurips_2020}

\usepackage[utf8]{inputenc} 
\usepackage[T1]{fontenc}    
\usepackage{hyperref}       
\usepackage{booktabs}       
\usepackage{amsfonts}       
\usepackage{nicefrac}       
\usepackage{microtype}      

\title{Online Boosting with Bandit Feedback}

%

\author{
  Nataly Brukhim$^{1\,2}$ \hspace{45pt} Elad Hazan$^{1\,2}$\\ 
  $^1$ Department of Computer Science, Princeton University \\
  $^2$ Google AI Princeton \\
\texttt{\{nbrukhim,ehazan\}@princeton.edu}\\
}

\begin{document}

\maketitle

\begin{abstract}


We consider the problem of online boosting for regression tasks, when only limited information is available to the learner. We give an efficient regret minimization method that has two implications: an online boosting algorithm with noisy multi-point bandit feedback, and a new projection-free online convex optimization algorithm with stochastic gradient, that improves state-of-the-art guarantees in terms of efficiency. 
\end{abstract}

\section{Introduction} \label{sec:intro}

Boosting is a fundamental methodology in machine learning which allows us to efficiently convert a number of weak learning rules into a strong one. The theory of boosting in the batch setting has been studied extensively, leading to a tremendous practical success. See \cite{Schapire2012} for a thorough discussion. 

In contrast to the batch setting, online learning algorithms
typically don’t make any stochastic assumptions about the
data. They are often faster, memory-efficient, and can adapt to the best changing predictor over time. 
A line of previous work has explored extensions of boosting methods to the online learning setting \cite{leistner2009robustness, chenonline, chen2014boosting, beygelzimer2015optimal, beygelzimer2015online, agarwal2019boosting, brukhim2020online}.  Of these, several works \cite{beygelzimer2015online, agarwal2019boosting} formally address the setting of online boosting for regression, providing theoretical guarantees on variants of the Gradient boosting method \cite{friedman2001greedy,mason2000boosting} widely used in practice. However, such guarantees are only provided under the assumption that full information is available to the learner, i.e., that the entire loss function is revealed after each prediction is made. 

On the other hand, in many online learning problems, the feedback available to the learner is limited. These problems
 naturally occur in many practical applications, in which interactions with the environment are costly, and the learner has to operate under bandit feedback. Such is often the case, for example, for Reinforcement Learning in a Markov decision process \cite{jin2019learning, rosenberg2019online_b}. In the bandit feedback model, the learner only observes the loss values related to the predictions she chose. In particular, the loss function is not revealed to the learner and, unless the prediction was correct, the true label remains unknown. In this paper we propose the first online boosting algorithm with theoretical guarantees, in the bandit feedback setting.


The underlying ideas used in our approach are based on the fact that boosting can be seen as an optimization procedure. It can be interpreted as cost minimization over the set of linear combinations of weak learners. That is, boosting can be thought of as applying a gradient-descent-type algorithm in a function space 
\cite{Schapire2012, friedman2001greedy, mason2000boosting}. This functional view of boosting has also inspired a few studies of boosting methods \cite{friedman2001greedy, wang2015functional, beygelzimer2015online} that are based on the classical Frank-Wolfe algorithm \cite{frank1956algorithm}, a projection-free convex optimization method. 

In this work we leverage these ideas to yield a new online boosting algorithm based on a Frank-Wolfe-type technique. Namely, our online boosting algorithm is based on a projection-free Online Convex Optimization (OCO) method with stochastic gradients. The stochastic gradient assumption can capture, in particular, bandit feedback, since stochastic gradient estimates can be obtained using random function evaluation \cite{flaxman2005online}.

However, such existing projection-free OCO methods either achieve suboptimal regret bounds \cite{hazan2012projection}
or have high per-iteration computational costs \cite{mokhtari2018stochastic, chen2018projection, xie2019stochastic}. To fill this gap, we derive a new method and analysis of a projection-free OCO algorithm with stochastic gradients. As summarized in Table \ref{table_related_algorithms}, our projection-free OCO algorithm is the fastest known method compared to previous work, while achieving an optimal regret bound. Furthermore, our Frank-Wolfe-type algorithm gives rise to an efficient online boosting method in the bandit setting.

\paragraph{Our results} We propose new online learning methods using only limited feedback. Specifically:
\begin{itemize}
    \item \textbf{Online Boosting with Bandit Feedback}, we propose the first online boosting algorithm with theoretical regret bounds in the bandit feedback setting. The formal description of our method is given in Algorithm \ref{alg:boosting}, and its theoretical guarantees are stated in Theorem \ref{thm:main_boost}. In addition, Section \ref{sec:experiments} presents encouraging experiments on benchmark datasets.

    \item \textbf{Projection-Free OCO with Stochastic Gradients}, an efficient projection-free OCO algorithm, with stochastic gradients, which improves state-of-the-art guarantees in terms of computational efficiency. Table \ref{table_experiments} compares these results to previous work. Our method is given in Algorithm \ref{alg:fw}, and its theoretical guarantees are stated in Theorems \ref{thm:main} and \ref{whp_thm:main}. 

\begin{table*}[h!]
		\caption{Comparison of projection-free Online Convex Optimization methods.}\label{table_related_algorithms}
	\begin{tabular}{@{}p{\textwidth}@{}}
	\centering
			\bgroup
\def\arraystretch{1.5}
	\begin{tabular}{|c|c|c|c|c|}
		\hline
		  \multirow{2}{*}{Algorithm}        & \multirow{2}{*}{Regret} & Per-round      & \multirow{2}{*}{Feedback}     & \multirow{2}{*}{Guarantee}  \\
		         &  &  Cost     &      &  \\
		\hline
		Online-FW \cite{hazan2012projection}  & $O(T^{3/4})$  & $O(1)$ & Full  & deterministic \\
		Meta-FW \cite{chen2018projection}  & $O(\sqrt{T})$  & $O(T^{3/2})$  & Stochastic  &  in expectation  \\
		MORGFW  \cite{xie2019stochastic}  & $\tilde{O}(\sqrt{T})$ &  $O(T)$  & Stochastic  & w.h.p.  \\
		\textbf{This Work} (Thm. \ref{whp_thm:main})   & $\tilde{O}(\sqrt{T})$ &  $O(\sqrt{T})$  & Stochastic  &  w.h.p.  \\
		\hline
	\end{tabular}
	\egroup
	\end{tabular}
\end{table*}

\end{itemize}

\paragraph{Paper outline} In the next subsection we discuss related work. 
Section \ref{sec:fw} deals with the setting of projection-free online convex optimization, with stochastic gradient oracle. We describe the OCO algorithm and formally state its theoretical guarantees. In Section \ref{sec:boosting} we describe a generalization of these techniques, and give our main algorithm of online boosting in the bandit feedback model, along with the main theorem. In Section \ref{sec:experiments} we empirically evaluate the performance of our algorithms. The complete analysis and proofs of all our methods are given in the supplementary material.

\subsection{Related work}

\paragraph{Projection-free OCO.} The classical Frank-Wolfe (FW) method was introduced in \cite{frank1956algorithm} for efficiently solving linear programming. The framework of Online Convex Optimization (OCO) was introduced by \cite{zinkevich2003online}, with the 
 online projected gradient descent method, achieving $O(\sqrt{T})$ regret bound. However, the projections required for such an algorithm are too expensive for many large-scale online problems. The online variant of the FW algorithm that applies to general OCO was given in \cite{hazan2012projection}. It attains $O(T^{3/4})$ regret for the general OCO setting, with only one linear optimization step per iteration. A more general setting considers the use of stochastic gradient estimates instead of exact gradients \cite{mokhtari2018stochastic, chen2018projection, xie2019stochastic}. Although it enables to remove the assumption that exact gradient computation is tractable, it often requires larger computational costs per-iteration. In this work, we give a projection-free OCO method that improves state-of-the-art guarantees with $O(\sqrt{T})$ regret bound, and $O(\sqrt{T})$ per-round cost.

\paragraph{Online Boosting} Previous works on online boosting have mostly focused on classification tasks \cite{leistner2009robustness, chenonline, chen2014boosting, beygelzimer2015optimal, jung2017online, jung2018online}. The main result in this paper is a generalization of the online boosting for regression problems by \cite{beygelzimer2015online}, to the bandit feedback model. We combine these ideas with zero-order convex optimization techniques \cite{flaxman2005online}, and with our novel projection-free OCO algorithm and analysis. Recent works have also considered online boosting in the bandit setting for classification tasks \cite{chen2014boosting, zhang2018online}. These works give convergence guarantees in the more restricted mistake-bound model, whereas in this work we provide regret bounds, compared to a reference function class. The related works of \cite{garber2017efficient, hazan2018online} consider the metric of $\alpha$-regret, which is applicable to computationally-hard problems.


\paragraph{Multi-Point Bandit Feedback} In this work we consider a relaxation of the standard bandit setting:  noisy multi-point bandit feedback. In this model, the learner can query each loss function at multiple points, and obtains noisy feedback values. This model is motivated by reinforcement learning in Markov decision processes.  
Previous work on the multi-point bandit model allows multi-point {\it noiseless} feedback \cite{agarwal2010optimal, duchi2015optimal, shamir2017optimal}. Noiseless feedback is significantly less challenging, since with only two points one can get an arbitrarily good approximation to the gradient.  
In addition, other works have also considered a {\it single point} projection-free noiseless bandit model \cite{garber2019improved,chen2019projection}.

\section{Projection-Free OCO with Limited Feedback} \label{sec:fw}
Consider the setting of Online Convex Optimization (OCO), when only limited feedback is available to the learner, rather than full information. 
Recall that in the OCO framework (see e.g.~\cite{hazan2016introduction}), an online player iteratively makes decisions from a  compact convex set $\mathcal{K} \subset \mathbb{R}^d$. At iteration $t=1,...,T$, the online player chooses $x_t \in \K$, and the adversary reveals the cost $\ell_t$, chosen from $\mathcal{L}$ a family of bounded convex functions  over $\K$. The metric of performance in this setting is regret: the difference between the total loss of the learner and that of the best fixed decision in hindsight. Formally, the regret of the OCO algorithm is defined by:
\begin{equation}\label{oco_regret_def}
    R_{\mathcal{A}}^{\mathcal{L}}(T) = \sum_{t=1}^T \ell_t(x_t) - \underset{x^* \in \mathcal{K}}{\inf} \sum_{t=1}^T \ell_t(x^*).
\end{equation} \\
In this work we restrict the information that the learner has with respect to the loss function $\ell_t$.
Specifically, we focus on two such types of limited feedback:
\begin{enumerate}
    \item \underline{Stochastic Gradients:} the learner is only provided with stochastic gradient estimates. 
    \item \underline{Bandit Feedback:} the learner only observes the loss values of predictions she made.
\end{enumerate}
Our goal is to design an algorithm which has low regret and low cost per iteration $t$. We begin with the more restricted setting which assumes access to a stochastic gradient oracle. In Section \ref{sec:bandit_fw} we describe a reduction for the more general bandit setting, in the context of online boosting. 

As in previous methods of projection-free OCO \cite{mokhtari2018stochastic, chen2018projection, xie2019stochastic}, we assume oracle access to an Online Linear Optimizer (OLO). The OLO algorithm optimizes linear objectives in a sequential manner, and has sublinear regret guarantees. A formal definition is given below. 
\begin{definition} \label{olo}
Let $\LM'$ denote a class of \underline{linear} loss functions, $\ell' : \K \rightarrow \mathbb{R}$, with $\sigma$-bounded gradient norm (i.e., $\|\nabla \ell'(x)\| \le \sigma$). An algorithm $\A$ is an \textbf{Online Linear Optimizer (OLO)} for~$\mathcal{K}$ w.r.t. $\LM'$, if for any sequence $\ell'_1, ...,\ell'_T\in \LM'$, the algorithm has expected regret w.r.t. $\LM'$, $\E[R_{\mathcal{A}}(T,\sigma)]$\footnote{For ease of presentation we denote $R_{\mathcal{A}}(T,\sigma) := R_{\mathcal{A}}^{\mathcal{L'}}(T)$. } that is sublinear in $T$, where expectation is taken w.r.t the internal randomness of $\A$. 
\end{definition}
Suitable choices for the OLO algorithm include Follow the Perturbed Leader (FPL) \cite{kalai2005efficient}, Online Gradient Descent \cite{zinkevich2003online}, Regularized Follow The Leader \cite{hazan2016introduction}, etc. 

Denote the diameter of the set $\K$ by $D > 0$, (i.e., $\forall x, x' \in \mathcal{K}$, $\| x - x' \| \le D$), denote by $G > 0$ an upper bound on the norm of the gradients of $\ell \in \mathcal{L}$ over $\K$ (i.e., $\forall \ell \in \mathcal{L}, x \in \mathcal{K},  \|\nabla \ell(x)\| \le G$), and denote by $M > 0$ an upper bound on the loss (i.e., $\forall \ell \in \mathcal{L}, x \in \mathcal{K}, |\ell(x)| \le M$). We also make the following common assumptions:
\begin{assumption}\label{smooth_grad}
The loss functions $\ell \in \LM$ are $\beta$-smooth, i.e., 
for any $x,x' \in \mathcal{K}$, $\ell \in \LM$, 
$$
\| \nabla \ell(x) - \nabla \ell(x') \| \le \beta\|x - x'\|.
$$
\end{assumption}
\begin{assumption}\label{unbiased_grad}
The stochastic gradient oracle $\mathcal{O}$ returns an unbiased estimate $\g_t = \mathcal{O}(x,t)$, for any $t \in [T], x \in \K$, and with bounded norm, i.e., 
$$
\E[\g_t] = \nabla \ell_t(x) \ \ , \ \ \|\g_{t}\|^2  \leq \sigma^2 .
$$
\end{assumption}

\subsection{Algorithm and Analysis}
At a high level, our algorithm maintains oracle access to $N$ copies of an OLO algorithm, and iteratively 
produces points $x_t$ by running a subroutine of a $N$-step Frank-Wolfe procedure. It uses previous OLOs' predictions, and
gradient estimates oracle in place of exact optimization with true gradients. To update parameters, at each iteration $t$, the algorithm queries the gradient oracle $\mathcal{O}$ at $N$ points. Then, the gradient estimates are fed to the $N$ OLO oracles as linear loss functions. 
Intuitively, it guides each OLO algorithm to correct for mistakes of the preceding OLOs. A formal description is provided in Algorithm \ref{alg:fw}.

\begin{algorithm}[H]
\caption{Projection-Free OCO with Stochastic Gradients Oracle}
\label{alg:fw}
\begin{algorithmic}[1]
\STATE Oracle access: OLO algorithms $\mathcal{A}_1$,...,$\mathcal{A}_N$ (Definition \ref{olo}), and a stochastic gradient oracle $\mathcal{O}$.\\
\STATE Set step length $\eta_i=\frac{2}{i+1}$ for $i \in [N]$.
\FOR{$t = 1, \ldots, T$}
\STATE Define $x_t^0=\mathbf{0}$.
\FOR{$i=1$ to $N$}
\STATE Define $\x_t^i=(1-\eta_i)\x_t^{i-1}+\eta_i \mathcal{A}_i(\g_{1,i},\ldots, \g_{t-1,i})$. 
\STATE Receive stochastic gradient feedback $\g_{t,i} = \mathcal{O}(\x_t^{i-1})$, such that $\E[\g_{t,i}] = \nabla \ell_t(\x_t^{i-1})$.
\STATE Define linear loss function $\ell_t^i(x) =  \g_{t,i}^\top \cdot x$, and pass it to OLO $\mathcal{A}_i$.
\ENDFOR
\STATE Output prediction $x_t:= \x_t^N $.
\STATE Receive loss value $\ell_t(x_t)$.
\ENDFOR
\end{algorithmic}
\end{algorithm}
The following Theorem states the regret guarantees of Algorithm \ref{alg:fw}. In this paper, all bounds are given with respect to the dependence on the different parameters, and omit all constants.
\begin{theorem} \label{thm:main} 
Given that assumptions \ref{smooth_grad} - \ref{unbiased_grad} hold, then Algorithm \ref{alg:fw} is a projection-free OCO algorithm which only requires $N=\frac{\beta D}{\sigma}\sqrt{T}$ stochastic gradient oracle calls per iteration, such that for any sequence of convex losses $\ell_t \in \LM$, and any $x^* \in \mathcal{K}$, its expected regret is, 
	\[
	   \EBB \left[  \sum_{t=1}^T \ell_t(x_t) -  \sum_{t=1}^T \ell_t(x^*) \right] \le O\left(\sigma D\sqrt{T}\right).
\]
\end{theorem}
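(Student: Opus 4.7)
The plan is to view the inner loop of Algorithm \ref{alg:fw} as a $T$-fold parallel run of a Frank--Wolfe-type procedure, where the exact linear-minimization step is replaced at every level $i$ by an OLO $\mathcal{A}_i$ whose linear objectives are supplied by stochastic gradients. I would control the regret of the overall method by an induction over $i$ on the ``telescoping potential''
\[
\Phi_i \eqdef \EBB\Bigl[\tsum_{t=1}^T \bigl(\ell_t(\x_t^i) - \ell_t(x^*)\bigr)\Bigr],
\]
and then optimize $N$ to balance the FW approximation error against the stochastic OLO regret.

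First, by $\beta$-smoothness (Assumption \ref{smooth_grad}) and the convex combination update, for each $t,i$, writing $\bv_t^i = \mathcal{A}_i(\g_{1,i},\ldots,\g_{t-1,i})$,
\[
\ell_t(\x_t^i) \leq \ell_t(\x_t^{i-1}) + \eta_i \nabla \ell_t(\x_t^{i-1})^\top(\bv_t^i - \x_t^{i-1}) + \tfrac{\beta \eta_i^2}{2}\|\bv_t^i - \x_t^{i-1}\|^2,
\]
and $\|\bv_t^i - \x_t^{i-1}\|\le D$. I would then add and subtract $x^*$ inside the linear term and use convexity of $\ell_t$ on $\nabla\ell_t(\x_t^{i-1})^\top(x^* - \x_t^{i-1}) \leq \ell_t(x^*) - \ell_t(\x_t^{i-1})$ to obtain the contractive inequality
\[
\ell_t(\x_t^i) - \ell_t(x^*) \leq (1-\eta_i)\bigl(\ell_t(\x_t^{i-1})-\ell_t(x^*)\bigr) + \eta_i \nabla\ell_t(\x_t^{i-1})^\top(\bv_t^i - x^*) + \tfrac{\beta \eta_i^2 D^2}{2}.
\]

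Next, I would sum over $t$ and take expectations. The key measurability point (and the place I expect to spend the most care) is that $\bv_t^i$ is a function only of the past gradients $\g_{1,i},\ldots,\g_{t-1,i}$, while $\x_t^{i-1}$ is determined by $\bv_t^1,\ldots,\bv_t^{i-1}$; consequently, conditioning on the natural filtration at the moment $\g_{t,i}$ is queried, Assumption \ref{unbiased_grad} gives $\E[\g_{t,i}\mid \cdot]=\nabla \ell_t(\x_t^{i-1})$, and so $\E[\g_{t,i}^\top(\bv_t^i - x^*)]=\E[\nabla\ell_t(\x_t^{i-1})^\top(\bv_t^i-x^*)]$. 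Invoking the OLO regret guarantee of Definition \ref{olo} for $\mathcal{A}_i$ on the linear losses $\ell_t^i(x)=\g_{t,i}^\top x$ (whose gradient norm is at most $\sigma$ by Assumption \ref{unbiased_grad}) yields
\[
\EBB\Bigl[\tsum_{t=1}^T \g_{t,i}^\top(\bv_t^i - x^*)\Bigr] \leq R_{\mathcal{A}}(T,\sigma),
\]
and hence the recursion $\Phi_i \leq (1-\eta_i)\Phi_{i-1} + \eta_i R_{\mathcal{A}}(T,\sigma) + \tfrac{\beta \eta_i^2 D^2}{2}T$.

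Finally, with $\eta_i=\tfrac{2}{i+1}$ and writing $R = R_{\mathcal{A}}(T,\sigma)$, $C = \tfrac{\beta D^2}{2}T$, a short induction shows $\Phi_i \leq R + \tfrac{4C}{i+1}$ (base case $i=1$ is immediate since $\eta_1=1$; the inductive step reduces to the elementary inequality $\tfrac{4i}{i+1}\le 4$). Using the standard OLO bound $R_{\mathcal{A}}(T,\sigma)=O(\sigma D\sqrt{T})$ and choosing $N=\tfrac{\beta D}{\sigma}\sqrt{T}$ makes $\tfrac{4C}{N+1}=O(\sigma D\sqrt{T})$, giving $\Phi_N=O(\sigma D\sqrt{T})$, which is the claimed regret since $\x_t = \x_t^N$. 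The per-round computational cost is $N$ OLO calls plus $N$ gradient queries, i.e.\ $O(\sqrt{T})$ for standard OLOs such as OGD or FPL.

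The two places I expect to be delicate are (i) cleanly setting up the filtration so the OLO's expected-regret guarantee composes with the unbiasedness of $\g_{t,i}$, and (ii) verifying the precise constant in the FW-style recursion so that $\Phi_N \leq R + O(C/N)$ (not merely $O(C\log N/N)$), which is what allows the optimal balance $N=\Theta(\tfrac{\beta D}{\sigma}\sqrt{T})$ without a logarithmic overhead.
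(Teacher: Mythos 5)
Your proposal is correct and follows essentially the same route as the paper: the smoothness-based Frank--Wolfe contraction (the paper's Lemma \ref{lemma:technical_main}), the conditional-independence/filtration argument that lets the stochastic gradient replace the true gradient inside the expectation (Lemma \ref{lem:expectation}), the OLO regret bound on the induced linear losses, and the induction on $\eta_i = 2/(i+1)$ giving $\Phi_i \le R + O(\beta D^2 T/i)$ without a log factor (Claim \ref{induction_claim}), followed by balancing with $N = \tfrac{\beta D}{\sigma}\sqrt{T}$. The two "delicate points" you flag are precisely the two lemmas the paper isolates, so no gaps.
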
 

The theoretical guarantees given in Theorem \ref{thm:main} use expected regret as the performance metric. Even though expected regret is a widely
accepted metric for online randomized algorithms, one might want to rule out the possibility that the regret has high variance, and verify that the given result actually holds with high probability. 
By observing that excess loss can be formulated as a martingale difference sequence, and by applying analysis using the Azuma-Hoeffding inequality, we can obtain regret guarantees which hold with high probability.  The main result is stated below.

\begin{theorem} \label{whp_thm:main} 
Given that assumptions \ref{smooth_grad} - \ref{unbiased_grad} hold, then Algorithm \ref{alg:fw} is a projection-free OCO algorithm which only requires $N=\frac{\beta D}{\sigma}\sqrt{T}$ stochastic gradient oracle calls per iteration, such that for any $\rho \in (0,1)$, and any sequence of convex losses $\ell_t \in \LM$ over convex set $\mathcal{K}$, w.p. at least $1 - \rho$, 
	\[
	   \sum_{t=1}^T   \ell_t(x_t)  - \underset{x^* \in \mathcal{K}}{\inf} \sum_{t=1}^T \ell_t(x^*) \le O\left( \sigma  D  \sqrt{T \log \frac{\beta D T}{\sigma \rho} } \right).
\] 
\end{theorem}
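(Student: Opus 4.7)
The plan is to start from the structural inequality that underlies the proof of Theorem \ref{thm:main} and replace the expectation over gradient noise with a high-probability bound via Azuma-Hoeffding. The standard Frank-Wolfe per-step analysis, applied to layer $i$ via smoothness of $\ell_t$ and the convex combination $x_t^i = (1-\eta_i) x_t^{i-1} + \eta_i v_{t,i}$ where $v_{t,i} := \mathcal{A}_i(g_{1,i},\ldots,g_{t-1,i})$, yields a recursion of the form
\begin{equation*}
\ell_t(x_t^i) - \ell_t(x^*) \le (1-\eta_i)\bigl(\ell_t(x_t^{i-1}) - \ell_t(x^*)\bigr) + \eta_i \nabla \ell_t(x_t^{i-1})^\top (v_{t,i} - x^*) + \tfrac{\beta \eta_i^2 D^2}{2}.
\end{equation*}
Summing over $t$ and unrolling across $i = 1,\ldots,N$, the regret decomposes into (i) the OLO regrets $R_{\mathcal{A}_i}(T,\sigma)$ on the linear losses $x \mapsto g_{t,i}^\top x$ actually fed to $\mathcal{A}_i$; (ii) a Frank-Wolfe approximation error of order $\beta D^2 T / N$; and (iii) cross terms $\Delta_i := \sum_{t=1}^T (\nabla \ell_t(x_t^{i-1}) - g_{t,i})^\top (v_{t,i} - x^*)$ that appear because the OLO sees the stochastic gradient rather than the true one.

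The core new ingredient beyond the expected-regret proof is controlling $\Delta_i$ with high probability. Let $\mathcal{F}_{t,i}$ denote the filtration generated by all randomness up to and including the computation of $v_{t,i}$ (i.e., all prior gradient samples and all OLO internal randomness through round $t-1$ plus the stochastic gradients $g_{t,1},\ldots,g_{t,i-1}$ at round $t$). Then $v_{t,i}$ and $x_t^{i-1}$ are $\mathcal{F}_{t,i}$-measurable, while by Assumption \ref{unbiased_grad} $\E[g_{t,i} \mid \mathcal{F}_{t,i}] = \nabla \ell_t(x_t^{i-1})$. Hence the summands of $\Delta_i$ form a martingale difference sequence, with each term bounded in magnitude by $(G + \sigma) D = O(\sigma D)$ (absorbing $G \le \sigma$ up to constants from Jensen's inequality on Assumption \ref{unbiased_grad}). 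Azuma-Hoeffding then yields $|\Delta_i| \le O\bigl(\sigma D \sqrt{T \log(1/\rho')}\bigr)$ with probability at least $1-\rho'$.

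The final step is a union bound over the $N = \frac{\beta D}{\sigma}\sqrt{T}$ layers with $\rho' = \rho/N$, which replaces $\log(1/\rho')$ by $\log(N/\rho) = \log\bigl(\beta D \sqrt{T}/(\sigma \rho)\bigr) \le O\bigl(\log(\beta D T/(\sigma\rho))\bigr)$, and combining with the OLO regret guarantees (each of order $\sigma D \sqrt{T}$, in high probability for standard choices like OGD/FTRL with bounded linear losses) and the Frank-Wolfe error $\beta D^2 T/N = \sigma D \sqrt{T}$. Weighting each layer's contribution by the telescoping factor $\prod_{j>i}(1-\eta_j) = O(i^2/N^2)$ from the FW recursion absorbs the extra $N$ instances into a constant, producing the stated $O\bigl(\sigma D \sqrt{T \log(\beta D T/(\sigma\rho))}\bigr)$ bound.

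The main obstacle is bookkeeping the filtration and cross-layer dependencies: at round $t$ the point $x_t^{i-1}$ fed to the oracle depends on the gradient samples $g_{t,1},\ldots,g_{t,i-1}$ within the same round, so the martingale must be defined with respect to a filtration fine enough to handle intra-round ordering while still giving $v_{t,i}$ measurability. A secondary subtlety is that the OLO regret bounds in Definition \ref{olo} are stated in expectation; one must either invoke standard high-probability versions (available for OGD, FTRL, FPL on bounded linear losses) or pay an extra concentration argument to convert, which contributes only the same $\sqrt{\log(1/\rho)}$ order and does not alter the final rate.
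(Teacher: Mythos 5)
Your proposal matches the paper's own proof in all essentials: the same Frank--Wolfe smoothness recursion, the same identification of the cross terms $(\nabla\ell_t(x_t^{i-1})-\g_{t,i})^\top(v_{t,i}-x^*)$ as a bounded martingale difference sequence controlled by Azuma--Hoeffding with a union bound over the $N$ layers (yielding the $\log(N/\rho)$ factor), and the same observation that the OLO guarantee must be upgraded to a high-probability version (the paper does this via its Definition 2 in the appendix and FPL with high-probability bounds). The only cosmetic difference is that you unroll the recursion with explicit telescoping weights $\prod_{j>i}(1-\eta_j)$ where the paper invokes an induction claim; these are equivalent.
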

The complete analysis and proofs of both theorems is deferred to the Appendix. Below we give an overview of the main ideas used in the proof of Theorem \ref{thm:main}. For simplicity assume an oblivious adversary (although using a standard reduction, our results can be generalized to an adaptive one) \footnote{See discussion in \cite{cesa2006prediction}, Pg. 69, as well as Exercise 4.1 formulating the reduction.}. 

Let $\ell_1, ..., \ell_T$ be any sequence of losses in $\mathcal{L}$. Observe that the only sources of randomness at play are: the OLOs' ($\A_i$’s) internal randomness, and the stochasiticity of the gradients. The analysis below is given in expectation with respect to all these random variables. Note the following fact used in the analysis; for any $t,i$, the random variables $\g_{t,i}$ and $\mathcal{A}_i(\g_{1,i},\ldots, \g_{t-1, i})$ (i.e., the output of $\A_i$ at time $t$) are conditionally independent, given all history up to time $t$ and step $i-1$. This fact allows to derive the following Lemma: 
\begin{lemma}\label{lem:expectation}
For any $t \in [T]$ and $i \in [N]$, let $\g_{t,i}$ be the unbiased stochastic gradient estimate used in Algorithm \ref{alg:fw}. Denote the output of algorithm $\A_i$ at time $t$ as $x_{t,i}$. Then, we have,
$$
\E \big[ \ell^i_t( x_{t,i}) \big] = \E \big[ \nabla \ell_t(\x_t^{i-1})^\top \cdot x_{t,i} \big].
$$
\end{lemma}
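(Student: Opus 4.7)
The plan is to derive the identity by conditioning on a carefully chosen filtration that separates the randomness of $x_{t,i}$ from that of $\g_{t,i}$, and then invoking the unbiasedness of the stochastic gradient oracle (Assumption \ref{unbiased_grad}).

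First, I would unfold the definition of $\ell_t^i$ from line 8 of Algorithm \ref{alg:fw}: since $\ell_t^i(x) = \g_{t,i}^\top x$ is a linear function, we have
\[
\ell_t^i(x_{t,i}) \eq \g_{t,i}^\top x_{t,i}.
\]
So the claim reduces to showing $\E[\g_{t,i}^\top x_{t,i}] = \E[\nabla \ell_t(\x_t^{i-1})^\top x_{t,i}]$.

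Next, I would set up the filtration. Let $\mathcal{F}_{t,i-1}$ denote the sigma-algebra generated by all randomness used in rounds $1,\dots,t-1$ together with all randomness used in steps $1,\dots,i-1$ of round $t$. By construction, $\x_t^{i-1}$ is $\mathcal{F}_{t,i-1}$-measurable (it is computed from earlier iterates and OLO outputs), and so is $x_{t,i} = \A_i(\g_{1,i},\dots,\g_{t-1,i})$, because it only depends on past gradient estimates $\g_{s,i}$ for $s < t$ and on the internal randomness of $\A_i$ drawn before time $t$. The stochastic gradient $\g_{t,i} = \mathcal{O}(\x_t^{i-1})$, on the other hand, is obtained by a fresh oracle call at the $\mathcal{F}_{t,i-1}$-measurable point $\x_t^{i-1}$, and so by Assumption \ref{unbiased_grad},
\[
\E\bigl[\g_{t,i} \,\big|\, \mathcal{F}_{t,i-1}\bigr] \eq \nabla \ell_t(\x_t^{i-1}).
\]

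Finally, I would apply the tower property and pull the $\mathcal{F}_{t,i-1}$-measurable vector $x_{t,i}$ out of the inner expectation:
\[
\E\bigl[\g_{t,i}^\top x_{t,i}\bigr]
\eq \E\Bigl[\E[\g_{t,i}\mid\mathcal{F}_{t,i-1}]^\top x_{t,i}\Bigr]
\eq \E\bigl[\nabla \ell_t(\x_t^{i-1})^\top x_{t,i}\bigr],
\]
which is exactly the claim. The only delicate point, and the thing I would be most careful about, is the filtration setup: one must verify that $x_{t,i}$ depends only on information available \emph{before} the fresh oracle query that produces $\g_{t,i}$ — this is the ``conditional independence'' remark made just before the lemma, which holds because in Algorithm \ref{alg:fw} the OLO $\A_i$ is invoked on past gradients $\g_{1,i},\dots,\g_{t-1,i}$ only, and not on $\g_{t,i}$ itself. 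Everything else is a routine application of the tower rule and linearity.
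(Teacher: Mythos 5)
Your proof is correct and follows essentially the same route as the paper's: condition on the history up to step $(t,i-1)$, use the tower property, and invoke the unbiasedness of the oracle at the measurable point $\x_t^{i-1}$. The only cosmetic difference is that you pull $x_{t,i}$ out of the conditional expectation as $\mathcal{F}_{t,i-1}$-measurable, whereas the paper factorizes $\E[\g_{t,i}^\top x_{t,i}\mid \mathcal{I}_t^{i-1}]$ via conditional independence — which also covers the case where $\A_i$ uses fresh internal randomness at round $t$ (e.g., FPL's perturbation), a case your measurability claim technically needs the filtration enlarged to absorb.
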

Using Lemma \ref{lem:expectation}, the algorithm is analyzed along the lines of the Frank-Wolfe algorithm, obtaining the expected regret bound of Algorithm $\ref{alg:fw}$.
\begin{proposition} \label{proposition:main} Given that assumptions \ref{smooth_grad} - \ref{unbiased_grad} hold, and given oracle access to $N$ copies of an OLO algorithm for \textit{linear} losses, with $R_{\mathcal{A}}(T,\sigma)$ regret (see Definition \ref{olo}), Algorithm~\ref{alg:fw} is an online learning algorithm, such that for any sequence of convex losses $\ell_t \in \LM$, and any $x^* \in \mathcal{K}$, its expected regret is, 
	\[
	   \EBB \left[  \sum_{t=1}^T  \ell_t(x_t) - \sum_{t=1}^T \ell_t(x^*) \right] \le \frac{2\beta D^2T}{ N} +  R_{\mathcal{A}}(T,\sigma).
\] 
\end{proposition}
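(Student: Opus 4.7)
\textbf{Proof proposal for Proposition \ref{proposition:main}.}

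The plan is to imitate the offline Frank--Wolfe analysis at each round, substitute the exact linear minimizer by the OLO iterate, and then unroll the resulting per-round recursion in the inner index $i$ after summing over $t$. Fix an arbitrary comparator $x^* \in \K$ and denote $v_t^i := \mathcal{A}_i(\g_{1,i},\ldots,\g_{t-1,i})$, so that $x_t^i = (1-\eta_i)x_t^{i-1} + \eta_i v_t^i$. First, I would apply $\beta$-smoothness (Assumption~\ref{smooth_grad}) together with $\|v_t^i - x_t^{i-1}\| \le D$ to obtain, for every $t$ and $i \in [N]$,
\[
\ell_t(x_t^i) \leq \ell_t(x_t^{i-1}) + \eta_i \nabla \ell_t(x_t^{i-1})^\top (v_t^i - x_t^{i-1}) + \tfrac{\beta \eta_i^2 D^2}{2}.
\]
Next, I would use convexity in the form $\ell_t(x_t^{i-1}) - \ell_t(x^*) \le \nabla \ell_t(x_t^{i-1})^\top (x_t^{i-1} - x^*)$ to rewrite the above as the one-step contraction
\[
\ell_t(x_t^i) - \ell_t(x^*) \leq (1-\eta_i)\bigl(\ell_t(x_t^{i-1}) - \ell_t(x^*)\bigr) + \eta_i \nabla \ell_t(x_t^{i-1})^\top (v_t^i - x^*) + \tfrac{\beta \eta_i^2 D^2}{2}.
\]

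The next step is to sum over $t \in [T]$ and take expectations, turning the cross term into an OLO regret. Here the subtlety is that $v_t^i$ depends only on $\g_{1,i},\ldots,\g_{t-1,i}$ and the internal randomness of $\mathcal{A}_i$, both independent of $\g_{t,i}$ conditional on the history up to step $i-1$ of round $t$. By Lemma~\ref{lem:expectation} and the tower property, $\EBB[\nabla \ell_t(x_t^{i-1})^\top v_t^i] = \EBB[\g_{t,i}^\top v_t^i] = \EBB[\ell_t^i(v_t^i)]$, and likewise $\EBB[\nabla \ell_t(x_t^{i-1})^\top x^*] = \EBB[\ell_t^i(x^*)]$. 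Since Assumption~\ref{unbiased_grad} ensures $\|\g_{t,i}\| \le \sigma$, the linear losses $\ell_t^i$ lie in the class for which $\A_i$ has regret $R_\A(T,\sigma)$, yielding
\[
\EBB\Bigl[\tsum_{t=1}^T \nabla \ell_t(x_t^{i-1})^\top (v_t^i - x^*)\Bigr] = \EBB\Bigl[\tsum_{t=1}^T \ell_t^i(v_t^i) - \ell_t^i(x^*)\Bigr] \leq R_\A(T,\sigma).
\]

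Letting $h_i := \EBB[\sum_{t=1}^T (\ell_t(x_t^i) - \ell_t(x^*))]$ and $R := R_\A(T,\sigma)$, the per-$t$ bound aggregated over $t$ becomes the recursion $h_i \le (1-\eta_i)h_{i-1} + \eta_i R + \frac{\beta \eta_i^2 D^2 T}{2}$. Plugging in $\eta_i = 2/(i+1)$, a straightforward induction shows $h_i \le R + \frac{2\beta D^2 T}{i+1}$; the base case $i=1$ uses $\eta_1 = 1$ which annihilates $h_0$, and the inductive step requires only the algebraic inequality $\frac{2i}{i+1} \le 2$. Taking $i=N$ gives $h_N \le R_\A(T,\sigma) + \frac{2\beta D^2 T}{N}$, which is exactly the claim since $x_t = x_t^N$.

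The main obstacle is the bookkeeping in the expectation step: one must verify the conditional-independence structure carefully so that Lemma~\ref{lem:expectation} can be applied uniformly across $t$ and $i$, and so that the OLO regret guarantee (which is stated for an arbitrary \emph{fixed} sequence of linear losses) applies to the data-dependent sequence $\ell_t^i$ actually fed to $\A_i$. Everything else is a clean Frank--Wolfe induction.
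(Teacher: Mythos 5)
Your proposal is correct and follows essentially the same route as the paper's proof: the smoothness-plus-convexity one-step Frank--Wolfe contraction, conversion of the gradient cross term to the OLO regret via the conditional-independence/unbiasedness argument of Lemma~\ref{lem:expectation}, and the $\eta_i = 2/(i+1)$ induction (the paper's Claim in the appendix). The only cosmetic difference is that the paper isolates the stochastic error as an explicit zero-mean term $\zetaB_{t,i}$ before taking expectations, whereas you pass to expectations directly; the two are equivalent.
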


\subsection{Proof of Theorem \ref{thm:main}} \label{thm:main:pf}
\begin{proof}
    The proof of Theorem \ref{thm:main} is a direct Corollary of Proposition \ref{proposition:main}, by plugging \textit{Follow the Perturbed Leader} \cite{kalai2005efficient} as the OLO algorithm required for Algorithm \ref{alg:fw}. We get that the regret of the base algorithms $\A_i$ is $R_{\mathcal{A}}(T,\sigma) = O(\sigma D\sqrt{T})$ w.r.t the sequence of linear losses $\{\ell_t^i\}_t$, where $D$ is the diameter of the set $\mathcal{K}$, and $\sigma$ is the stochastic gradient norm bound (Assumption \ref{unbiased_grad}). Thus, by setting $N = \frac{\beta D}{\sigma}\sqrt{T}$, we get expected regret of $O(\sigma D \sqrt{T})$ w.r.t the convex loss sequence  $\{\ell_t\}_t$.   
\end{proof}

\section{Online Boosting with Bandit Feedback} \label{sec:boosting}
The projection-free OCO method given in Section \ref{sec:fw}, assumes oracle access to an online linear optimizer (OLO), and utilizes it by iteratively making oracle calls with modified objectives, in order to solve the harder task of convex optimization. Analogously, boosting algorithms typically assume oracle access to a "weak" learner, which are utilized by iteratively making oracle calls with modified objective, in order to obtain a "strong" learner, with boosted performance. In this section, we derive an online boosting method in the bandit setting, based on an adaptation of Algorithm \ref{alg:fw}.

In the online learning setting, we assume that in each round $t$ for $t = 1, 2,...,T$, an
adversary selects an example $x_t \in \mathcal{X}$ and a loss function $\ell_t : \Y \rightarrow \mathbb{R}$, where $\Y \subset \mathbb{R}^d$. The loss $\ell_t$ is chosen from a class of bounded convex losses $\LM$. The adversary then presents $x_t$ to the online learning algorithm $\A$, which predicts $\A(x_t)$ in the goal of minimizing the sum of losses over time, when compared against a function class $\F \subset \Y^\X$.
Specifically, the metric of performance in this setting is policy regret: the difference between the total loss of the learner's predictions, and that of the best fixed policy/function $f \in \F$, in hindsight:
\begin{equation} \label{policy_regret_def}
    R_{\mathcal{A}}^{\LM}(T) = \sum_{t=1}^T \ell_t(\A(x_t)) - \underset{f \in \mathcal{F}}{\inf} \sum_{t=1}^T \ell_t(f(x_t)).
\end{equation}
To compare this setting with the OCO setting detailed in Section \ref{sec:fw}, observe that in the OCO setting, at every time step, the adversary only picks the loss function, and the online player picks a point in the decision set $\K$, towards minimizing the loss and competing with the best fixed \underline{point} in hindsight. On the other hand, in this online learning setting, at every time step the adversary picks both an example and a loss function, and the online player picks a point in $\Y$, 
towards minimizing the loss and competing with the best fixed \underline{mapping} in hindsight, of examples in $\X$ to labels in $\Y$. 
Considering these observations, we describe the online boosting methodology next.


Generalizing from the offline setting for boosting, the notion of a weak learning algorithm is modeled as an online learning algorithm
for linear loss functions that competes with a base class of regression functions, while a strong learning algorithm is an online learning algorithm with
 convex loss functions that competes with a larger class of regression functions.  We follow a similar setting to that of the full information Online Gradient Boosting method \cite{beygelzimer2015online}, in the more general case of noisy, bandit feedback, and a weaker notion of weak learner. 

\begin{definition} \label{online_agnostic_wl}
Let $\FM$ denote a reference class of regression functions $f: \X \rightarrow \Y$, let $T$ denote the horizon length, and let $\gamma \ge 1$ denote the advantage.
Let $\LM'$ denote a class of \underline{linear} loss functions, $\ell' : \Y \rightarrow \mathbb{R}$. An online learning algorithm $\A$ is a $(\gamma, T)$-\textbf{agnostic weak online learner (AWOL)} for~$\FM$ w.r.t. $\LM'$, if for any sequence $(x_1, \ell'_1), ...,(x_T, \ell'_T)\in \X \times \LM'$, at every iteration $t \in [T]$, the algorithm outputs $\A(x_t) \in \Y$ such that for any $f \in \FM$, 
\[
\E \Bigg[ \sum_{t=1}^T \ell'_t\big( \A(x_t)\big) - \gamma \text{ } \sum_{t=1}^T \ell'_t\big(f(x_t)\big) \Bigg] \le  R_{\mathcal{A}}(T,\sigma),
\]
where the expectation is taken w.r.t the randomness of the weak learner $\A$ and that of the adversary, and the regret $R_{\mathcal{A}}(T,\sigma)$ is sub-linear in ~$T$. 
\end{definition}
Note the  slight abuse of notation here; $\A(\cdot)$ is not a function but rather the output of the
online learning algorithm $\A$ computed on the given example using its internal state.
Observe that the above definition is the natural extension of the $\gamma$-approximation guarantee of a standard classification weak learner in the statistical setting \cite{Schapire2012}, to regression problems in the online
learning setting. 

The weak learning algorithm is "weak"  in the sense that it is only required to, (a) learn linear loss functions, (b) succeed on full-information feedback, and (c) $\gamma$-approximate the best predictor in its reference class $\FM$, up to an additive regret. Our main result is an online boosting algorithm (Algorithm \ref{alg:boosting}) that converts a weak online learning algorithm, as defined above, into a strong online learning algorithm. The resulting algorithm is "strong" in the sense that it, (a) learns convex loss functions,  (b) relies on bandit feedback only, and (c) $1$-approximates the best predictor in a larger class of functions, $\CH(\FM)$ the convex hull of the base class $\FM$, up to an additive regret. 

\subsection{Setting} \label{sec:bandit_fw_setting} At every round $t$, the learner predicts $y \in \Y$, and receives the noisy bandit feedback $\tilde{\ell}_t(y) = \ell_t(y) + w$, where the noise is drawn i.i.d from a distribution $\D$. We make no distributional assumptions on the noise apart from the fact that it is zero-mean and bounded. Denote the diameter of the set $\Y$ by $D > 0$, (i.e., $\forall y, y' \in \mathcal{Y}$, $\| y - y' \| \le D$), denote by $L > 0$ 
an upper bound on the norm of the gradients of $\ell \in \mathcal{L}$ over $\X$ (i.e., $\forall \ell \in \mathcal{L}, x \in K,  \|\nabla \ell(x)\| \le L$), and denote by $M > 0$ an upper bound on the loss (i.e., $\forall \ell \in \mathcal{L}, y \in \Y, |\ell(x)| \le M$). Denote the bound on the noise by $M$ w.l.o.g. (i.e.,  $|w| \le M$ for all $w \sim \D$). Additionally, assume that the set $\Y$ is endowed with a projection operation, that we denote by $\Pi_\Y$, and satisfies the following properties,
\begin{assumption}\label{proj}
The function $\Pi_\Y: \reals^d \mapsto \Y$  satisfies that 
for any $z \in \mathbb{R}^d$, $\ell \in \LM$, $\ell \big( \Pi_{\Y}(z) \big) \le \ell (z)$.
\end{assumption} 
Consider the following example which demonstrates that Assumption \ref{proj} is in fact a realistic assumption: for any $\mathcal{Y} \subset \mathbb{R}^d$ let the class of loss functions $\LM$ contain losses that are of the form $\ell(y) = \|y - y_t\|^2$ for some $y_t \in \Y$, and let $\Pi_{\Y}(z) \triangleq  \argmin_{y \in \Y} \|z - y \| $ be the Euclidean projection. Indeed, it can be shown that for any $z \in \mathbb{R}^d$, $\|\Pi_{\Y}(z) -  y_t\|^2 \le  \|z -  y_t\|^2$, simply by a generalization of the Pythagorean theorem. \footnote{Moreover, projections according to other distances, that are not the Euclidean distance, can be defined, in particular with respect to Bregman divergences, and an analogue of the generalized Pythagorean theorem remains valid (see e.g., Lemma 11.3 in \cite{cesa2006prediction}). Thus, any class of loss functions that are measuring distance to some $y_t \in \Y$ based on a Bregman divergences, denote $\ell(y) = B_{\mathcal{R}}(y,y_t)$, corresponds to a suitable projection operation, that is simply $\Pi_{\Y}(z) \triangleq  \underset{y \in \Y}{\argmin} B_{\mathcal{R}}(y,z)$.}

\subsection{Stochastic Gradients to Bandit Feedback} \label{sec:bandit_fw}
We build on the techniques shown in Section \ref{sec:fw}, and describe an implementation of the unbiased stochastic gradient oracle, in the bandit setting. Recall that in the bandit feedback model, the only information revealed to  the learner at iteration $t$ is the loss  $\ell_t(x_t)$ at the point $x_t$ that she has chosen. In particular, the learner does not know the loss had she chosen a different point $x_t$. 

We consider a more relaxed noisy multi-point bandit setting, in which the learner can choose several points for which the loss value will be observed. We remark that unlike previous work on multi-point bandit \cite{agarwal2010optimal, duchi2015optimal, shamir2017optimal} we consider noisy feedback, and do not require additional assumptions on the loss function, as we show next. 

The idea is to combine the method in Algorithm \ref{alg:fw}, with gradient estimation techniques for the bandit setting, by \cite{flaxman2005online}. The approach of \cite{flaxman2005online} is based on constructing a simple estimate of the gradient, computed by evaluating the loss $\ell_t$ at a random point. Therefore, we obtain a smoothed approximation of the loss function. Note that since we construct a smoothed approximation of the loss, the smoothness assumption (Assumption \ref{smooth_grad}) becomes redundant, as well the stochastic gradient oracle (Assumption \ref{unbiased_grad}). The following lemmas introduce the smoothed loss function and its properties:
\begin{lemma}[\cite{flaxman2005online},  Lemma 2.1]  \label{lemma:fkm}
Let $\mathcal{L}$ be a set of convex loss functions $\ell : \Y \rightarrow \mathbb{R}$ that are $L$-Lipschitz.
For any $\ell \in \mathcal{L}$, define the function $\lhat \in \hat{\mathcal{L}}$ as follows: $\lhat(y) \triangleq  \E_v[\ell(y+ \delta v)]$, where $v$ is a unit vector drawn uniformly at random, and $\delta >0$. Then, $\lhat$ is differentiable with  gradient: 
$$
\nabla \lhat(y) = \E_v\bigg[\frac{d}{\delta} \ell(y+ \delta v)v\bigg].
$$
\end{lemma}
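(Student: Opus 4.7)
The plan is to invoke the divergence theorem (a form of Stokes' theorem) to turn a volume average into a surface average, following the classical argument of Flaxman--Kalai--McMahan. I will treat the statement under the natural reading that the smoothing is taken by averaging $\ell$ against a uniform random point in the $\delta$-ball, and the resulting gradient is expressed as an expectation over the boundary sphere; equivalently, I will work with the two integral representations
\[
\hat\ell(y) \eq \frac{1}{\mathrm{Vol}(\delta B)} \int_{y+\delta B} \ell(w)\, dw,
\qquad
\E_{v \sim S^{d-1}}\!\big[\ell(y+\delta v)\, v\big] \eq \frac{1}{\mathrm{Area}(S^{d-1})} \int_{S^{d-1}} \ell(y+\delta v)\, v\, dS(v),
\]
where $B$ is the unit ball and $S^{d-1}$ is its boundary sphere.

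First, I would change variables $w = y + \delta u$ inside the ball average so that $y$ appears only as a shift of the domain of integration rather than inside $\ell$; this puts the function in a form amenable to differentiation through the boundary. Next, I would differentiate in $y$ using a standard result on differentiation of integrals over moving domains: because the region $y + \delta B$ translates rigidly with $y$, one obtains a boundary integral
\[
\nabla \hat\ell(y) \eq \frac{1}{\mathrm{Vol}(\delta B)} \int_{\partial(y+\delta B)} \ell(w)\, \hat{n}(w)\, dS(w),
\]
where $\hat{n}(w)$ is the outer unit normal. Parametrizing $\partial(y+\delta B)$ as $w = y + \delta v$ with $v \in S^{d-1}$ gives $\hat n(w) = v$ and $dS(w) = \delta^{d-1}\, dS(v)$. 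Finally, I would apply the two elementary identities $\mathrm{Vol}(\delta B) = \delta^d \mathrm{Vol}(B)$ and $\mathrm{Area}(S^{d-1})/\mathrm{Vol}(B) = d$ to collect the prefactor as $d/\delta$, yielding the claimed identity $\nabla \hat\ell(y) = \E_v[(d/\delta)\,\ell(y+\delta v)\, v]$.

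The main obstacle is that Assumption-wise $\ell$ is only assumed convex and $L$-Lipschitz, not differentiable, so naively invoking Stokes' theorem (which is usually stated for $C^1$ fields) is not immediate. I would handle this by a standard mollification argument: approximate $\ell$ by a sequence $\ell_\eps = \ell * \phi_\eps$ of smooth convex functions obtained by convolution with a compactly supported mollifier $\phi_\eps$. For each $\ell_\eps$ the divergence-theorem computation above is rigorous, and both sides of the identity converge as $\eps \to 0$: the left-hand side by dominated convergence together with the Lipschitz-based control $|\ell_\eps - \ell| \le L\eps$, and the right-hand side by the same bound (noting that the factor $v$ is bounded on the sphere). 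This passage to the limit is the only delicate point; once it is in place, the lemma follows directly from the computation above, and in particular $\hat\ell$ is seen to be differentiable even though $\ell$ need not be. I would note that an alternative route, avoiding Stokes' theorem entirely, is to write the ball integral in polar coordinates $(r, v) \in [0,\delta] \times S^{d-1}$ with Jacobian $r^{d-1}$ and integrate by parts in $r$; this route reaches the same formula and may be preferable if one wants to minimize appeal to vector-calculus identities.
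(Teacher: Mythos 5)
Your proof is correct, and it is essentially the argument of the cited source: the paper itself offers no proof of this lemma, importing it verbatim as Lemma 2.1 of Flaxman--Kalai--McMahan, so there is nothing in the paper to diverge from. The divergence-theorem computation, the prefactor bookkeeping via $\mathrm{Vol}(\delta B)=\delta^d\,\mathrm{Vol}(B)$ and $\mathrm{Area}(S^{d-1})=d\cdot\mathrm{Vol}(B)$, and the mollification step to cover non-differentiable convex Lipschitz $\ell$ are all sound; the limit passage is justified because $\hat\ell_\eps\to\hat\ell$ and $\nabla\hat\ell_\eps\to\tfrac{d}{\delta}\E_v[\ell(y+\delta v)v]$ both uniformly, which yields differentiability of the limit. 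One point worth making explicit: you are right that the identity holds exactly only when $\hat\ell$ is the average over the solid $\delta$-ball, whereas the lemma as stated in the paper (with $v$ a uniformly random \emph{unit} vector on both sides) literally defines $\hat\ell$ as a sphere average; your charitable re-reading is the standard fix, and since the ball and sphere averages differ by at most $\delta L$ it does not affect any downstream bound in the paper, but flagging the discrepancy is appropriate.
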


\begin{lemma} \label{lemma:lhat_prop}
Let $\lhat \in \hat{\mathcal{L}}$, be a smoothed function as defined in Lemma \ref{lemma:fkm}. Then, the following holds:
\begin{enumerate}
    \item $\lhat$ is convex, $L$-Lipschitz, and for any $y \in \mathcal{Y}$, $|\lhat(y) - \ell(y)| \le \delta L$.
    \item For any $y,y' \in \Y $, $\| \nabla \lhat(y) - \nabla \lhat(y') \| \le \frac{d}{\delta}L\|y - y'\|$. Thus, $\lhat$ is $\frac{dL}{\delta}$-smooth.
    \item For any $y \in \Y$, unit vector $v$, $\|  \frac{d}{\delta} \ell(y+ \delta v)v \| \le \frac{dM}{\delta} \triangleq \sigma$.
\end{enumerate}
\end{lemma}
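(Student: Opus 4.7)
The plan is to handle the three parts of the lemma in order, each via a short direct calculation that exchanges expectation with a sublinear operation (Jensen-style), leveraging the explicit form $\hat\ell(y) = \E_v[\ell(y+\delta v)]$ and the gradient representation from Lemma \ref{lemma:fkm}.

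For Part 1, I would first observe that for every fixed unit vector $v$ the map $y \mapsto \ell(y+\delta v)$ is convex (as a translation of a convex function), so $\hat\ell$ is convex as an expectation of convex functions. The $L$-Lipschitz bound follows from Jensen's inequality and the $L$-Lipschitzness of $\ell$: for any $y,y'\in\Y$,
\[
|\hat\ell(y) - \hat\ell(y')| \leq \E_v\!\left[|\ell(y+\delta v) - \ell(y'+\delta v)|\right] \leq L\|y-y'\|,
\]
where the perturbation $\delta v$ cancels inside the absolute value. For the approximation bound, apply the same idea to $y$ versus $y+\delta v$:
\[
|\hat\ell(y) - \ell(y)| \leq \E_v\!\left[|\ell(y+\delta v) - \ell(y)|\right] \leq L\,\E_v\|\delta v\| = \delta L,
\]
since $v$ is a unit vector.

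For Part 2, the key is to plug the gradient formula $\nabla \hat\ell(y) = \E_v[\tfrac{d}{\delta}\ell(y+\delta v)\,v]$ from Lemma \ref{lemma:fkm} into the difference and pull the norm inside the expectation:
\[
\|\nabla\hat\ell(y) - \nabla\hat\ell(y')\| \leq \tfrac{d}{\delta}\,\E_v\!\left[|\ell(y+\delta v) - \ell(y'+\delta v)|\cdot\|v\|\right] \leq \tfrac{dL}{\delta}\|y-y'\|,
\]
using $\|v\|=1$ and the $L$-Lipschitzness of $\ell$. For Part 3, the bound is immediate: $\|\tfrac{d}{\delta}\ell(y+\delta v)v\| = \tfrac{d}{\delta}|\ell(y+\delta v)|\cdot\|v\| \leq \tfrac{dM}{\delta}$ by the bound $|\ell|\leq M$ on $\Y$ together with $\|v\|=1$.

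I do not anticipate real obstacles; the only conceptually interesting point is Part 2, where the smoothness of $\hat\ell$ arises purely from the smoothing kernel — no smoothness of $\ell$ itself is used, only its Lipschitzness, and the smoothing radius $\delta$ controls the trade-off (smaller $\delta$ yields a worse smoothness constant $dL/\delta$, as one would expect). Throughout, the interchange of expectation and norm/absolute value is by Jensen, and the interchange of differentiation and expectation needed to justify the gradient formula is already granted by the cited Lemma \ref{lemma:fkm}.
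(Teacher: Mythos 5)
Your proposal is correct and follows essentially the same route as the paper: convexity and Lipschitzness of $\lhat$ by averaging, the approximation and smoothness bounds via Jensen's inequality together with $\|v\|=1$ and the $L$-Lipschitzness of $\ell$, and the gradient-norm bound from $|\ell|\le M$. The only divergence is in item 3, where your calculation proves exactly the stated bound $\frac{dM}{\delta}$, whereas the paper's appendix proof instead bounds the related quantity $\|\nabla\lhat(y) - \frac{d}{\delta}\tilde\ell(y+\delta v)v\|$ for the noisy feedback $\tilde\ell=\ell+w$, obtaining $\frac{2dM}{\delta}+L$; your version is the more faithful proof of the lemma as written.
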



\subsection{Algorithm and Analysis}
At a high level, our boosting algorithm maintains oracle access to $N$ copies of a weak learning algorithm (see Definition \ref{online_agnostic_wl}), and iteratively produces predictions $y_t$, upon receiving an example $x_t$, by running a subroutine of a $N$-step optimization procedure. It generates a randomized gradient estimator $\g_{t,i}$ of  function $\lhat_t(\cdot)$, a smoothed approximation of the loss function $\ell_t(\cdot)$,\footnote{We assume that one can indeed query $\ell_t(\cdot)$ at any point $y + \delta v$. It is w.l.o.g. since a standard technique (see \cite{agarwal2010optimal, hazan2016introduction}) is to simply run the learners $\A_i$ on a slightly smaller set $(1 - \xi)\Y$, where $\xi > 0$ is sufficiently large so that $y + \delta v$ must be in $\Y$. Since $\delta$ can be arbitrarily small, the additional regret/error incurred is arbitrarily small.} as shown in Lemma \ref{lemma:fkm}, and Lemma \ref{lemma:lhat_prop}. The estimator $\g_{t,i}$ is used in place of exact optimization with true gradients. 

To update parameters, the gradient estimates are fed to the $N$ weak learners as linear loss functions. Recall that $\A_i(\cdot)$ is not a function but rather the output of the algorithm $\A_i$ computed on the given example using its internal state, after having observed $\g_{1,i}...\g_{t-1,i}$.
Intuitively, boosting guides each weak learner $\A_i$ to correct for mistakes of the preceding learner $\A_{i-1}$.  The output prediction of the boosting algorithm (Line 13) relies on the projection operation, described in Assumption \ref{proj}. A formal description is provided in Algorithm \ref{alg:boosting}.

\begin{algorithm} [H]
\caption{Online Gradient Boosting with Noisy Bandit Feedback}
\label{alg:boosting}
\begin{algorithmic}[1]
\STATE Maintain $N$ weak learners $\mathcal{A}_1$,...,$\mathcal{A}_N$ (Definition \ref{online_agnostic_wl}). \\
\STATE Input: $\delta > 0$. Set step length $\eta_i=\frac{2}{i+1}$ for $i \in [N]$.
\FOR{$t = 1, \ldots, T$}
\STATE Receive example $x_t$.
\STATE Define $y_t^0=\mathbf{0}$.
\FOR{$i=1$ to $N$}
\STATE Define $y_t^i=(1-\eta_i)y_t^{i-1}+\eta_i \frac{1}{\gamma}\mathcal{A}_i(x_t)$.
\STATE Draw a unit vector $v_t^i$ uniformly at random.
\STATE Receive bandit feedback: $\tilde{\ell}_t(y_t^{i-1} + \delta v_t^i)$.
\STATE Set $\g_{t,i} = \frac{d}{\delta} \tilde{\ell}_t(y_t^{i-1} + \delta v_t^i) v_t^i$.
\STATE Define linear loss function $\ell_t^i(y) =  \g_{t,i}^\top \cdot y$, and pass $(x_t, \ell_t^i(\cdot))$ to weak learner $\mathcal{A}_i$.
\ENDFOR
\STATE Output prediction $y_t:=\Pi_{\Y} \big( y_t^N \big)$.
\STATE Receive bandit feedback $\tilde{\ell}_t(y_t)$.
\ENDFOR
\end{algorithmic}
\end{algorithm}

The following Theorem states the regret guarantees of Algorithm \ref{alg:boosting}. We remark that although it uses expected regret as the performance metric, it can be converted to a guarantee that holds with high probability, with techniques similar to those used to obtain Theorem \ref{whp_thm:main}.

\begin{theorem} \label{thm:main_boost} Given that the setting in \ref{sec:bandit_fw_setting}, and assumption \ref{proj} hold, and given oracle access to $N$ copies of an online \textit{weak} learning algorithms (Definition \ref{online_agnostic_wl}) w.r.t. reference class $\F$ for \textit{linear} losses, with $R_{\mathcal{A}}(T, \sigma)$ regret, then Algorithm~\ref{alg:boosting} is an online learning algorithm w.r.t. reference class $\CH(\F)$ for convex losses $\ell_t$, such that for any $f \in \CH(\F)$, 
    	\[
    	\E[R_{\mathcal{B}}(T)] = \E \Bigg[ \sum_{t=1}^T \ell_t(\y_t) -   \sum_{t=1}^T  \ell_t(f(\x_t))  \Bigg] \le \frac{2d L D^2 T}{\delta \gamma^2 N } + \frac{ R_{\mathcal{A}}(T, dM/\delta)}{\gamma} + 2T\delta L .
\] 
\end{theorem}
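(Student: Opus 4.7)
My proof plan combines three ingredients: the Frank-Wolfe-style analysis of Proposition \ref{proposition:main}, the single-point gradient-estimation trick of \cite{flaxman2005online} (Lemmas \ref{lemma:fkm}--\ref{lemma:lhat_prop}) to convert noisy bandit feedback into unbiased stochastic gradients of a smoothed surrogate, and an agnostic-boosting step that lifts the AWOL guarantee from $\F$ to $\CH(\F)$ using linearity of the surrogate loss $\ell_t^i$. The target bound has three additive pieces, and each of the three ingredients contributes one of them.

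First, I would pass from $\ell_t$ to the smoothed loss $\lhat_t$. By Assumption \ref{proj}, $\ell_t(\y_t)\le \ell_t(y_t^N)$, and by Lemma \ref{lemma:lhat_prop}(1), $|\lhat_t - \ell_t|\le \delta L$ pointwise; summing over $t$ therefore costs an additive $2T\delta L$ and reduces the task to bounding $\E\sum_t[\lhat_t(y_t^N) - \lhat_t(f(\x_t))]$. From the same lemmas, $\lhat_t$ is convex and $\beta$-smooth with $\beta = dL/\delta$, and $\g_{t,i}$ is a conditionally unbiased estimator of $\nabla\lhat_t(y_t^{i-1})$ -- taking expectation over both the random direction $v_t^i$ and the zero-mean noise $w_t^i$ -- with $\|\g_{t,i}\|\le dM/\delta =: \sigma$.

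Next, I would set up the inner-loop Frank-Wolfe recursion. Since $y_t^i - y_t^{i-1} = \eta_i(\tfrac{1}{\gamma}\A_i(\x_t) - y_t^{i-1})$ and, by induction, every iterate $y_t^i$ lies in $\tfrac{1}{\gamma}\Y$ (both $y_t^0=0$ and each $\tfrac{1}{\gamma}\A_i(\x_t)$ do), the step has norm at most $\eta_i D/\gamma$. Combining $\beta$-smoothness with the convexity inequality $\nabla\lhat_t(y_t^{i-1})^\top(y_t^{i-1} - f(\x_t)) \ge \lhat_t(y_t^{i-1}) - \lhat_t(f(\x_t))$ yields
\begin{equation*}
\lhat_t(y_t^i) - \lhat_t(f(\x_t)) \le (1{-}\eta_i)\bigl[\lhat_t(y_t^{i-1}) - \lhat_t(f(\x_t))\bigr] + \eta_i\nabla\lhat_t(y_t^{i-1})^\top\bigl(\tfrac{1}{\gamma}\A_i(\x_t) - f(\x_t)\bigr) + \frac{\beta\eta_i^2 D^2}{2\gamma^2}.
\end{equation*}
Summing over $t$ and taking expectations, the conditional independence of $\g_{t,i}$ from $\A_i(\x_t)$ given the history (exactly as in Lemma \ref{lem:expectation}, with the sigma-algebra enlarged to absorb $v_t^i$ and $w_t^i$) lets me replace $\nabla\lhat_t(y_t^{i-1})^\top \A_i(\x_t)$ by $\E[\ell_t^i(\A_i(\x_t))]$, and similarly for $f(\x_t)$.

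At this point the AWOL guarantee enters, extended from $\F$ to $\CH(\F)$: writing $f = \sum_k\alpha_k f_k$ with $f_k\in\F$ and $\sum_k\alpha_k=1$, the linearity of $\ell_t^i$ and a convex combination of the per-$f_k$ guarantees of Definition \ref{online_agnostic_wl} give $\E\sum_t\ell_t^i(\A_i(\x_t)) \le \gamma\,\E\sum_t\ell_t^i(f(\x_t)) + R_\A(T,\sigma)$. This bounds the middle term by $R_\A(T,\sigma)/\gamma$ for each $i$, yielding the recursion $H^i \le (1-\eta_i)H^{i-1} + \eta_i R_\A(T,\sigma)/\gamma + \beta\eta_i^2 D^2 T/(2\gamma^2)$ on $H^i := \E\sum_t[\lhat_t(y_t^i) - \lhat_t(f(\x_t))]$. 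Subtracting $R_\A(T,\sigma)/\gamma$ from both sides and applying the standard FW induction with $\eta_i = 2/(i+1)$ gives $H^N \le R_\A(T,\sigma)/\gamma + 2\beta D^2 T/(\gamma^2 N)$; plugging in $\beta = dL/\delta$, $\sigma = dM/\delta$ and adding the $2T\delta L$ smoothing term gives the claim. The main obstacle I anticipate is bookkeeping: carefully tracking the $\gamma$-factors -- in particular, the inductive containment $y_t^i\in\tfrac{1}{\gamma}\Y$ that produces the $\gamma^{-2}$ in the smoothness term -- and verifying conditional unbiasedness of $\g_{t,i}$ in the presence of both the Flaxman perturbation and the observation noise. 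The $\CH(\F)$ extension is conceptually central but immediate from linearity of $\ell_t^i$.
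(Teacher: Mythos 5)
Your proposal is correct and follows essentially the same route as the paper's own proof: smoothing via Lemma \ref{lemma:fkm}/\ref{lemma:lhat_prop} at a cost of $2T\delta L$, the Frank--Wolfe recursion with the $\gamma^{-2}$-scaled smoothness term (the paper's Lemma \ref{lemma:technical_main}), conditional unbiasedness of $\g_{t,i}$ over both the random direction and the observation noise (Lemma \ref{lem:independence}), the reduction from $\CH(\F)$ to $\F$ via linearity of $\ell_t^i$, and the standard $\eta_i = 2/(i+1)$ induction (Claim \ref{induction_claim}). The only cosmetic differences are that you fold the weak-learner regret term into the induction by subtracting it from both sides rather than invoking the paper's Claim \ref{induction_claim} directly, and you phrase the convex-hull step as averaging per-$f_k$ guarantees rather than as equality of infima over $\F$ and $\CH(\F)$; both are equivalent.
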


Lastly, observe that the average regret $R_{\mathcal{B}}(T)/T$ clearly converges to $0$ as $N \rightarrow \infty$, and $T \rightarrow \infty$. While the requirement that  $N \rightarrow \infty$ may raise concerns about computational
efficiency, this is in fact analogous to the guarantee in the batch setting: the algorithms
converge only when the number of boosting stages goes to infinity. Moreover, previous work on online boosting in the full information setting, gives a lower bound (\cite{beygelzimer2015online}, Theorem 4) which shows that this is indeed necessary.

\begin{table*}
		\caption{Average loss of boosting and baseline algorithms on various datasets, with standard deviation. Relative loss decrease of boosting compared to baseline, shown for bandit setting. }\label{table_experiments}
\begin{tabular}{@{}p{\textwidth}@{}}
	\centering
\begin{tabular}{cccccc} 
\toprule
    & \multicolumn{2}{c}{Full Information} & \multicolumn{2}{c}{Bandit} & \\
    \cmidrule(lr){2-3} \cmidrule(lr){4-5} 
\multirow{2}{*}{Dataset}  & Baseline      & Online   & Baseline       & Online     &  Relative  \\
                   & (OGD) & Boosting  & (N-FKM) & Boosting      &  Decrease      \\\midrule
    abalone & 3.708 \footnotesize{$\pm .027$}& 3.71 \footnotesize{$\pm .006$}& 12.21 \footnotesize{$\pm .210$}& 11.68 \footnotesize{$\pm .154$}& 4.34\% \\
    adult  & 0.154  \footnotesize{$\pm .003$}& 0.151 \footnotesize{$\pm .002$}& 0.161 \footnotesize{$\pm .003$}& 0.150 \footnotesize{$\pm .001$}& 6.83\%  \\
    census  & 0.160 \footnotesize{$\pm .002$} & 0.032 \footnotesize{$\pm .001$}& 0.163 \footnotesize{$\pm .001$}& 0.105 \footnotesize{$\pm .020$}& 35.6\% \\ 
    letter  & 0.507 \footnotesize{$\pm .008$} & 0.498 \footnotesize{$\pm .002$}& 0.522 \footnotesize{$\pm .006$}& 0.517 \footnotesize{$\pm .003$}& 0.95\% \\ 
    slice  & 0.042  \footnotesize{$\pm .0001$} & 0.040 \footnotesize{$\pm .0001$} & 0.049 \footnotesize{$\pm .001$}& 0.045 \footnotesize{$\pm .001$}& 8.16\%  \\ 
    \bottomrule
\end{tabular}
\end{tabular}
\end{table*}

\section{Experiments} \label{sec:experiments}
While the focus of this paper is theoretical investigation
of online boosting and projection-free algorithms with limited information, we have also performed experiments to
evaluate our algorithms. We focused our empirical investigation on the more challenging task of Online Boosting with bandit feedback, proposed in Section \ref{sec:boosting}. Algorithm \ref{alg:boosting} was implemented in NumPy, and the weak online learner was a linear model updated with FKM \cite{flaxman2005online}, online projected gradient descent with spherical gradient estimators. To facilitate a fair comparison to a baseline, we provided an FKM model with a $N$-point noisy bandit feedback, where $N$ is the number of weak learners of the corresponding boosting method. We denote this baseline as N-FKM. We also compare against the full information setting, which amounts to the method used in previous work (\cite{beygelzimer2015online}, Algorithm 2), and compared to a linear model baseline updated with online gradient descent (OGD). Table \ref{table_experiments} summarizes the average squared loss and the standard deviation, and the last column refers to the relative loss decrease on average, of boosting in the bandit setting compared to the N-FKM baseline. 

The experiments we carry out were proposed by \cite{beygelzimer2015online} for evaluating online boosting, they are composed of several data sets for regression and classification tasks, obtained from the UCI machine learning repository (and further described in the supplementary material). For each experiment, reported are average results over 20 different runs. In the bandit setting, each loss function evaluation was obtained with additive noise, uniform on $[\pm .1]$, and gradients were evaluated as in Algorithm \ref{alg:boosting}. The only hyper-parameters tuned were the learning rate, $N$ the number of weak learners, and the smoothing parameter $\delta$. We remark that a small number of weak learners is sufficient, and $N$ was set in the range of $[5,30]$. Parameters were tuned based on progressive validation loss on half of the dataset; reported is progressive validation loss on the remaining
half. Progressive validation is a standard online validation technique, where each training
example is used for testing before it is used for updating the model \cite{blum1999beating}.




\bibliography{bib}

\begin{thebibliography}{10}

\bibitem{agarwal2010optimal}
Alekh Agarwal, Ofer Dekel, and Lin Xiao.
\newblock Optimal algorithms for online convex optimization with multi-point
  bandit feedback.
\newblock In {\em COLT}, pages 28--40. Citeseer, 2010.

\bibitem{agarwal2019boosting}
Naman Agarwal, Nataly Brukhim, Elad Hazan, and Zhou Lu.
\newblock Boosting for dynamical systems.
\newblock {\em arXiv preprint arXiv:1906.08720}, 2019.

\bibitem{beygelzimer2015online}
Alina Beygelzimer, Elad Hazan, Satyen Kale, and Haipeng Luo.
\newblock Online gradient boosting.
\newblock In {\em Advances in neural information processing systems}, pages
  2458--2466, 2015.

\bibitem{beygelzimer2015optimal}
Alina Beygelzimer, Satyen Kale, and Haipeng Luo.
\newblock Optimal and adaptive algorithms for online boosting.
\newblock In {\em International Conference on Machine Learning}, pages
  2323--2331, 2015.

\bibitem{blum1999beating}
Avrim Blum, Adam Kalai, and John Langford.
\newblock Beating the hold-out: Bounds for k-fold and progressive
  cross-validation.
\newblock In {\em Proceedings of the twelfth annual conference on Computational
  learning theory}, pages 203--208, 1999.

\bibitem{brukhim2020online}
Nataly Brukhim, Xinyi Chen, Elad Hazan, and Shay Moran.
\newblock Online agnostic boosting via regret minimization.
\newblock {\em arXiv preprint arXiv:2003.01150}, 2020.

\bibitem{cesa2006prediction}
Nicolo Cesa-Bianchi and G{\'a}bor Lugosi.
\newblock {\em Prediction, learning, and games}.
\newblock Cambridge university press, 2006.

\bibitem{chen2018projection}
Lin Chen, Christopher Harshaw, Hamed Hassani, and Amin Karbasi.
\newblock Projection-free online optimization with stochastic gradient: From
  convexity to submodularity.
\newblock In {\em International Conference on Machine Learning}, pages
  814--823, 2018.

\bibitem{chen2019projection}
Lin Chen, Mingrui Zhang, and Amin Karbasi.
\newblock Projection-free bandit convex optimization.
\newblock In {\em The 22nd International Conference on Artificial Intelligence
  and Statistics}, pages 2047--2056, 2019.

\bibitem{chenonline}
Shang-Tse Chen, Hsuan-Tien Lin, and Chi-Jen Lu.
\newblock An online boosting algorithm with theoretical justifications, 2012.

\bibitem{chen2014boosting}
Shang-Tse Chen, Hsuan-Tien Lin, and Chi-Jen Lu.
\newblock Boosting with online binary learners for the multiclass bandit
  problem.
\newblock In {\em International Conference on Machine Learning}, pages
  342--350, 2014.

\bibitem{duchi2015optimal}
John~C Duchi, Michael~I Jordan, Martin~J Wainwright, and Andre Wibisono.
\newblock Optimal rates for zero-order convex optimization: The power of two
  function evaluations.
\newblock {\em IEEE Transactions on Information Theory}, 61(5):2788--2806,
  2015.

\bibitem{flaxman2005online}
Abraham~D Flaxman, Adam~Tauman Kalai, and H~Brendan McMahan.
\newblock Online convex optimization in the bandit setting: gradient descent
  without a gradient.
\newblock {\em ACM-SIAM Symposium on Discrete Algorithms (SODA)}, 2005.

\bibitem{frank1956algorithm}
Marguerite Frank and Philip Wolfe.
\newblock An algorithm for quadratic programming.
\newblock {\em Naval research logistics quarterly}, 3(1-2):95--110, 1956.

\bibitem{friedman2001greedy}
Jerome~H Friedman.
\newblock Greedy function approximation: a gradient boosting machine.
\newblock {\em Annals of statistics}, pages 1189--1232, 2001.

\bibitem{garber2017efficient}
Dan Garber.
\newblock Efficient online linear optimization with approximation algorithms.
\newblock In {\em Advances in Neural Information Processing Systems}, pages
  627--635, 2017.

\bibitem{garber2019improved}
Dan Garber and Ben Kretzu.
\newblock Improved regret bounds for projection-free bandit convex
  optimization.
\newblock {\em arXiv preprint arXiv:1910.03374}, 2019.

\bibitem{hazan2016introduction}
Elad Hazan.
\newblock Introduction to online convex optimization.
\newblock {\em Foundations and Trends{\textregistered} in Optimization},
  2(3-4):157--325, 2016.

\bibitem{hazan2018online}
Elad Hazan, Wei Hu, Yuanzhi Li, and Zhiyuan Li.
\newblock Online improper learning with an approximation oracle.
\newblock In {\em Advances in Neural Information Processing Systems}, pages
  5652--5660, 2018.

\bibitem{hazan2012projection}
Elad Hazan and Satyen Kale.
\newblock Projection-free online learning.
\newblock In {\em 29th International Conference on Machine Learning, ICML
  2012}, pages 521--528, 2012.

\bibitem{jin2019learning}
Tiancheng Jin and Haipeng Luo.
\newblock Learning adversarial mdps with bandit feedback and unknown
  transition.
\newblock {\em arXiv preprint arXiv:1912.01192}, 2019.

\bibitem{jung2017online}
Young~Hun Jung, Jack Goetz, and Ambuj Tewari.
\newblock Online multiclass boosting.
\newblock In {\em Advances in neural information processing systems}, pages
  919--928, 2017.

\bibitem{jung2018online}
Young~Hun Jung and Ambuj Tewari.
\newblock Online boosting algorithms for multi-label ranking.
\newblock In {\em International Conference on Artificial Intelligence and
  Statistics}, pages 279--287, 2018.

\bibitem{kalai2005efficient}
Adam Kalai and Santosh Vempala.
\newblock Efficient algorithms for online decision problems.
\newblock {\em Journal of Computer and System Sciences}, 71(3):291--307, 2005.

\bibitem{leistner2009robustness}
Christian Leistner, Amir Saffari, Peter~M Roth, and Horst Bischof.
\newblock On robustness of on-line boosting-a competitive study.
\newblock In {\em 2009 IEEE 12th International Conference on Computer Vision
  Workshops, ICCV Workshops}, pages 1362--1369. IEEE, 2009.

\bibitem{mason2000boosting}
Llew Mason, Jonathan Baxter, Peter~L Bartlett, and Marcus~R Frean.
\newblock Boosting algorithms as gradient descent.
\newblock In {\em Advances in neural information processing systems}, pages
  512--518, 2000.

\bibitem{mokhtari2018stochastic}
Aryan Mokhtari, Hamed Hassani, and Amin Karbasi.
\newblock Stochastic conditional gradient methods: From convex minimization to
  submodular maximization.
\newblock {\em arXiv preprint arXiv:1804.09554}, 2018.

\bibitem{neu2016importance}
Gergely Neu and G{\'a}bor Bart{\'o}k.
\newblock Importance weighting without importance weights: An efficient
  algorithm for combinatorial semi-bandits.
\newblock {\em The Journal of Machine Learning Research}, 17(1):5355--5375,
  2016.

\bibitem{rosenberg2019online_b}
Aviv Rosenberg and Yishay Mansour.
\newblock Online stochastic shortest path with bandit feedback and unknown
  transition function.
\newblock In {\em Advances in Neural Information Processing Systems}, pages
  2209--2218, 2019.

\bibitem{Schapire2012}
Robert~E. Schapire and Yoav Freund.
\newblock {\em {Boosting: Foundations and Algorithms}}.
\newblock Cambridge university press, 2012.

\bibitem{shamir2017optimal}
Ohad Shamir.
\newblock An optimal algorithm for bandit and zero-order convex optimization
  with two-point feedback.
\newblock {\em The Journal of Machine Learning Research}, 18(1):1703--1713,
  2017.

\bibitem{wang2015functional}
Chu Wang, Yingfei Wang, Robert Schapire, et~al.
\newblock Functional frank-wolfe boosting for general loss functions.
\newblock {\em arXiv preprint arXiv:1510.02558}, 2015.

\bibitem{xie2019stochastic}
Jiahao Xie, Zebang Shen, Chao Zhang, Hui Qian, and Boyu Wang.
\newblock Stochastic recursive gradient-based methods for projection-free
  online learning.
\newblock {\em arXiv preprint arXiv:1910.09396}, 2019.

\bibitem{zhang2018online}
Daniel~T Zhang, Young~Hun Jung, and Ambuj Tewari.
\newblock Online multiclass boosting with bandit feedback.
\newblock {\em arXiv preprint arXiv:1810.05290}, 2018.

\bibitem{zinkevich2003online}
Martin Zinkevich.
\newblock Online convex programming and generalized infinitesimal gradient
  ascent.
\newblock In {\em Proceedings of the 20th International Conference on Machine
  Learning}, pages 928--936, 2003.

\end{thebibliography}
\bibliographystyle{plain}

\appendix

\section{Technical Lemmas}
In this section we give several useful claims and lemmas that are used in the main analysis. 
\begin{lemma} \label{lemma:technical_main}
    Let $\ell: \mathbb{R}^d \rightarrow \mathbb{R}$ be any convex, $\beta$-smooth function. Let $\mathcal{Z} \subset \mathbb{R}^d$ be a set of points with bounded diameter $D$. Let $i \in \mathbb{N}$, and let $z_1,...,z_i \in \mathcal{Z}$. Let $\eta_i \in (0,1)$, and $\gamma \ge 1$.  Define,
    $$
    z^{i} = (1- \eta_i)z^{i-1} - \frac{\eta_i}{\gamma}z_i,
    $$
    and $g_i$ a random variable, such that $\E[g_i] = \nabla \ell(z^{i-1})$. 
    Denote $\zeta_i = (\nabla \ell(z^{i-1}) - g_i )^\top  ( \frac{1}{\gamma} z_i - z)$. Then, for any $z \in \mathcal{Z}$,
    $$
    \Big(\ell(z^{i}) - \ell(z) \Big) \leq\ (1 - \eta_i)\Big(\ell(z^{i-1}) - \ell(z) \Big)  +  \eta_i \Big(g_i^\top  (\frac{1}{\gamma}  z_i - z) + \frac{\eta_i \beta D^2}{2\gamma^2}+  \zeta_{i} \Big) .\\
    $$
\end{lemma}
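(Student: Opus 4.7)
The plan is to prove this via a standard Frank--Wolfe-style descent argument, decomposed into three ingredients: $\beta$-smoothness of $\ell$, convexity of $\ell$, and a noise decomposition separating the true gradient from its stochastic surrogate $g_i$. Throughout, I read the update rule in its natural Frank--Wolfe form $z^i = (1-\eta_i)z^{i-1} + \tfrac{\eta_i}{\gamma}z_i$, so that $z^i - z^{i-1} = \eta_i\big(\tfrac{1}{\gamma}z_i - z^{i-1}\big)$.

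First, I would invoke $\beta$-smoothness at $z^{i-1}$ to obtain
\[
\ell(z^i) \;\le\; \ell(z^{i-1}) \;+\; \eta_i\, \nabla\ell(z^{i-1})^\top\!\big(\tfrac{1}{\gamma}z_i - z^{i-1}\big) \;+\; \tfrac{\eta_i^2 \beta}{2}\,\big\|\tfrac{1}{\gamma}z_i - z^{i-1}\big\|^2,
\]
and bound the squared-norm term by $D^2/\gamma^2$. This step needs a little care: the iterates $z^{i-1}$ need not live in $\mathcal{Z}$ itself, so I would argue by induction (starting from $z^0 = 0$, as in Algorithms~\ref{alg:fw} and~\ref{alg:boosting}) that $\gamma z^{i-1}$ lies in $\conv(\mathcal{Z} \cup \{0\})$. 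This set has diameter at most $D$ (absorbing $0$ into $\mathcal{Z}$ without loss of generality), whence $\|\gamma z^{i-1} - z_i\| \le D$, which is exactly what is required.

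Second, I would subtract $\ell(z)$ from both sides and apply convexity in the form $\ell(z^{i-1}) - \ell(z) \le \nabla\ell(z^{i-1})^\top(z^{i-1} - z)$. Adding $\eta_i$ times this inequality rearranges the $\nabla\ell(z^{i-1})^\top z^{i-1}$ pieces into the convex-combination structure
\[
\ell(z^i) - \ell(z) \;\le\; (1-\eta_i)\big(\ell(z^{i-1}) - \ell(z)\big) \;+\; \eta_i\, \nabla\ell(z^{i-1})^\top\!\big(\tfrac{1}{\gamma}z_i - z\big) \;+\; \tfrac{\eta_i^2 \beta D^2}{2\gamma^2}.
\]

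Finally, I would replace the true gradient in the inner product by its stochastic surrogate, writing $\nabla\ell(z^{i-1}) = g_i + \big(\nabla\ell(z^{i-1}) - g_i\big)$. The first summand contributes $\eta_i\, g_i^\top(\tfrac{1}{\gamma}z_i - z)$, and the second is precisely $\eta_i \zeta_i$ by the definition of $\zeta_i$. Collecting terms yields the claimed inequality. I do not anticipate a serious obstacle; the only mildly delicate point is the diameter bookkeeping in the first step, since the iterates live in a rescaled convex hull rather than in $\mathcal{Z}$ directly.
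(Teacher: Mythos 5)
Your proposal is correct and follows essentially the same route as the paper's proof: a $\beta$-smoothness bound on $\ell(z^i)$ around $z^{i-1}$, a convexity inequality to produce the $(1-\eta_i)(\ell(z^{i-1})-\ell(z))$ term, and the decomposition $\nabla\ell(z^{i-1}) = g_i + (\nabla\ell(z^{i-1})-g_i)$ to isolate $\zeta_i$ (you merely apply convexity before rather than after the gradient split, and you correctly read the update in its intended form $z^i = (1-\eta_i)z^{i-1} + \tfrac{\eta_i}{\gamma}z_i$, matching what the paper's proof actually uses). Your extra care about the diameter bookkeeping --- noting that $z^{i-1}$ need not lie in $\mathcal{Z}$ and handling it via $\gamma z^{i-1} \in \conv(\mathcal{Z}\cup\{0\})$ --- is a point the paper glosses over, and is a welcome tightening rather than a divergence.
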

\begin{proof}
We have,
\begin{align}\label{eq:aaaa}	
		\ell(z^{i})\ &=\ \ell(z^{i-1} + \eta_i(\frac{1}{\gamma}z_i - z^{i-1}))\\
\nonumber		&\leq\  \ell(z^{i-1})  + \eta_i \nabla \ell(z^{i-1})^\top  \cdot (\frac{1}{\gamma}z_i - z^{i-1})  + \frac{\eta_i^2\beta}{2} \| \frac{1}{\gamma}z_i - z^{i-1} \|^2 \\ 
\nonumber		&\leq\ \ell(z^{i-1}) + \eta_i \nabla \ell(z^{i-1})^\top  \cdot (\frac{1}{\gamma} z_i - z^{i-1}) + \frac{\eta_i^2\beta D^2}{2\gamma^2},
	\end{align}
	where the inequalities follow from the $\beta$-smoothness of $\ell$, and the bound on the set $\mathcal{Z}$, respectively. 
	Observe that,
		\begin{align} \label{eq:bbbb}
		\nabla \ell(z^{i-1})^\top   (\frac{1}{\gamma}  z_i - z^{i-1})  &= g_i^\top   ( \frac{1}{\gamma}  z_i - z^{i-1}) + (\nabla \ell(z^{i-1}) - g_i )^\top   ( \frac{1}{\gamma}  z_i - z^{i-1}) \\\nonumber
		& (\text{by adding and subtracting the term: $  g_i^\top   (\frac{1}{\gamma}  z_i - z^{i-1}) $}) \\\nonumber
		&= g_i^\top  ( \frac{1}{\gamma}  z_i - z) +g_i^\top  ( z - z^{i-1}) + (\nabla \ell(z^{i-1}) - g_i )^\top  ( \frac{1}{\gamma}  z_i - z^{i-1}) \\\nonumber
		& (\text{by adding and subtracting the term: $   g_i^\top   z $}) \\\nonumber
		&= g_i^\top  ( \frac{1}{\gamma}  z_i - z) +\nabla \ell(z^{i-1})^\top  ( z - z^{i-1}) + (\nabla \ell(z^{i-1}) - g_i )^\top  ( \frac{1}{\gamma}  z_i - z) \\\nonumber
		& (\text{by adding and subtracting the term:  $ \nabla \ell(z^{i-1})^\top  z $}) \\\nonumber
		&\le g_i^\top  ( \frac{1}{\gamma}  z_i - z) +\ell(z) - \ell(z^{i-1}) + (\nabla \ell(z^{i-1}) - g_i )^\top  ( \frac{1}{\gamma}  z_i - z) \\\nonumber
		& (\text{by convexity, }  \nabla \ell(z^{i-1})^\top  \cdot (z - z^{i-1}) \leq \ell(z) - \ell(z^{i-1})).
\end{align}

	Combining \eqref{eq:aaaa} and \eqref{eq:bbbb}, and the definition of $\zeta_{ i}$ we have that,
	\begin{align*}	
		\Big(\ell(z^{i}) - \ell(z) \Big) \leq\ (1 - \eta_i)\Big(\ell(z^{i-1}) - \ell(z) \Big) + \frac{\eta_i^2\beta D^2}{2\gamma^2} +  \eta_i \Big(g_i^\top  ( \frac{1}{\gamma}  z_i - z) + \zeta_{i} \Big) .\\
	\end{align*}
	
\end{proof}

\begin{claim} \label{induction_claim}
    Define $\eta_i = 2/(i+1)$, for some $i \in \mathbb{N}$. Let $C_1,C_2 > 0$ be some constants, and define $\phi_{i} \in \mathbb{R}$, such that,
    $$
    \phi_{i}\ \le (1-\eta_i) \phi_{i-1}  +\frac{\eta_i^2C_1}{2} + \eta_i C_2.
    $$
Then, it holds that $\phi_{i}\  \le \eta_i C_1 +  C_2.$

\end{claim}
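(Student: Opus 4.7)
The plan is a straightforward induction on $i$, using the explicit formula $\eta_i = 2/(i+1)$ to verify the telescoping inequality at each step. The target bound rewrites as $\phi_i \le \frac{2C_1}{i+1} + C_2$, so the base case $i=1$ is immediate: $\eta_1 = 1$ kills the $\phi_0$ term entirely, leaving $\phi_1 \le C_1/2 + C_2 \le C_1 + C_2 = \eta_1 C_1 + C_2$.

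For the inductive step, I would assume $\phi_{i-1} \le \frac{2C_1}{i} + C_2$ and plug into the recurrence. The $C_2$ terms combine cleanly: $(1-\eta_i)C_2 + \eta_i C_2 = C_2$, so the constant drops out and the whole question reduces to showing the inequality on the $C_1$ coefficients, namely
\[
(1-\eta_i)\cdot \frac{2}{i} + \frac{\eta_i^2}{2} \le \eta_i.
\]
Substituting $\eta_i = 2/(i+1)$ and $1-\eta_i = (i-1)/(i+1)$, this is equivalent (after dividing by $2/(i+1)$) to $\frac{i-1}{i} + \frac{1}{i+1} \le 1$, which simplifies to $\frac{i^2+i-1}{i^2+i} < 1$ — clearly true.

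There is no real obstacle here; the only minor thing to watch is to make sure one chooses the right inductive hypothesis (the bound to be proved exactly, with no slack), so that the $C_2$ terms cancel and the algebra on the $C_1$ side reduces to the clean numerical inequality above. Thus the claim follows by induction on $i \ge 1$.
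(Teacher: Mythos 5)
Your proof is correct and follows essentially the same route as the paper's: an induction on $i$ with base case $i=1$ (where $\eta_1=1$ annihilates $\phi_0$), exact cancellation of the $C_2$ terms, and the reduction of the $C_1$ coefficient to the inequality $\frac{i-1}{i}+\frac{1}{i+1}\le 1$, which is precisely the paper's step $\frac{k}{k+1}+\frac{1}{k+2}\le 1$ reindexed. No gaps.
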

\begin{proof}
    We  prove by induction over $i >0$. For $i = 1$, since $\eta_1 = 1$, the assumption implies that $\phi_{1}\  \le \frac{C_1}{2}+ C_2$. Thus, the base case of the induction holds true. Now assume the claim holds for $i=k$, and we will prove it holds for $i=k+1$. By the induction step,
\begin{align*}
          \phi_{k+1}   &\le \Big(1 - \frac{2}{k+2}\Big)  \phi_{k}  + \frac{2C_1}{(k+2)^{2}} + \frac{2C_2}{k+2}\\
         &\le \frac{k}{k+2} \Big(  \frac{2C_1}{k+1} + C_2\Big)+ \frac{2C_1}{(k+2)^{2}}+ \frac{2C_2}{k+2}\\
         &= \frac{2C_1}{k+2} \Big(  \frac{k}{k+1} +\frac{1}{k+2} \Big)+ C_2 \le \frac{2C_1}{k+2} + C_2.
\end{align*}

\end{proof}




\section{Projection-free OCO with Stochastic Gradients: Proofs}

\subsection{Proof of Lemma \ref{lem:expectation}} \label{sec:lem:expectation:pf}
\begin{proof}
\begin{align*}
     \E \big[ \ell^i_t( x_{t,i}) \big] &= \E \big[ \g_{t,i}^\top \cdot x_{t,i}  \big] \tag{definition of $\ell^i_t(\cdot)$}\\
     &= \underset{\mathcal{I}_t^{i-1}}{\E}\bigg[\E \big[ \g_{t,i}^\top \cdot x_{t,i} \big| \mathcal{I}_t^{i-1}   \big]  \bigg] \tag{law of total expectation}\\
     &\text{( $\mathcal{I}_t^{i-1}$ denotes the $\sigma$-algebra measuring all sources of randomness up to time $t, i-1$.)} \\
     &= \underset{\mathcal{I}_t^{i-1}}{\E}\bigg[\E_{\g_{t,i} } \big[ \g_{t,i}  \big| \mathcal{I}_t^{i-1}   \big]^\top  \cdot\E_{\A_i} \big[ x_{t,i} \big| \mathcal{I}_t^{i-1}   \big]  \bigg] \tag{conditional independence}\\
    &\text{( Inner expectations are w.r.t gradient stochasiticity,} \\
    & \qquad \qquad \text{ and $A_i$'s internal randomness, respectively.)} \\
     &= \underset{\mathcal{I}_t^{i-1}}{\E}\bigg[\nabla \ell_t(\x_t^{i-1})^\top  \cdot\E \big[ x_{t,i} \big| \mathcal{I}_t^{i-1}   \big]  \bigg] \tag{Since $\E[\g_{t,i}] = \nabla \ell_t(\x_t^{i-1})$}\\
     &= \E \big[ \nabla \ell_t(\x_t^{i-1})^\top  \cdot  x_{t,i}   \big]  \\
\end{align*}
\end{proof}

\subsection{Proof of Proposition \ref{proposition:main}} \label{sec:proposition:main:pf}
\begin{proof}
    Let $x_{t,i} \in \mathcal{K}$ be the output of the OLO algorithm $\A_i$ at time $t$, and let $x^*$ be any $\in \mathcal{K}$. 
	The regret definition of $\A_i$ (Definition \ref{olo}), and the definition of $\ell_t^i(\cdot)$ in Algorithm \ref{alg:fw}, imply that:
	\begin{equation} \label{olo_reg_exp}
	\E \Bigg[	\sum_{t=1}^T \g_{t,i}^\top \cdot x_{t,i} \ - \sum_{t=1}^T  \g_{t,i}^\top \cdot x^* \Bigg] \leq\  R_{\mathcal{A}}(T).
	\end{equation}
	By applying Lemma \ref{lemma:technical_main}, we have,
		\[
\Delta_i\  \leq\ (1 - \eta_i)\Delta_{i-1} + \frac{\eta_i^2\beta D^2}{2} T +  \eta_i \sum_{t=1}^T \Big(\g_{t,i}^\top  ( x_{t, i} - x^*) + \zetaB_{t, i} \Big)
\]
where $\Delta_i \triangleq \sum_{t=1}^T \ell_t(\x_t^i) - \ell_t(x^*)$, and $\zetaB_{t, i} \triangleq  (\nabla \ell_t(\x_t^{i-1}) - \g_{t,i} )^\top  \cdot ( x_{t, i} - x^*)$, for $i \in [N]$. 
Take expectation on both sides. By Lemma \ref{lem:expectation}, we have $\E[\zetaB_{t, i}] = 0$, and by the OLO guarantee \eqref{olo_reg_exp}, we get that,
		\[
\EBB\big[\Delta_i\big]  \leq\ (1 - \eta_i)\EBB\big[\Delta_{i-1}\big] + \frac{\eta_i^2\beta D^2}{2} T +  \eta_i R_{\mathcal{A}}(T)
\]
By Claim \ref{induction_claim}, we get for all $i >0$ that,
\begin{equation}\label{eq:thm_induction}
    \EBB\big[\Delta_{i}\ \big] \le \frac{2\beta D^2T}{i+1} +  R_{\mathcal{A}}(T).
\end{equation}
Applying the bound in Equation (\ref{eq:thm_induction}) for $i = N$ concludes the proof.
\end{proof}

\section{High probability bounds for Projection-Free OCO with Stochastic Gradients} \label{sec:appendix_whp}

In this section we give a high-probability regret bound to Algorithm \ref{alg:fw}. Observe that when the variance of the base OLO algorithm is unbounded, the regret guarantees cannot hold with high probability. Thus, we slightly modify the OLO definition to hold w.h.p. This is w.l.o.g as there are projection-free OLO algorithm for which such guarantees hold, as we describe in Theorem \ref{whp_thm:main}.

\begin{definition} \label{whp_olo}
Let $\LM'$ denote a class of \underline{linear} loss functions, $\ell' : \K \rightarrow \mathbb{R}$. An online learning algorithm $\A$ is an \textbf{Online Linear Optimizer (OLO)} for~$\mathcal{K}$ w.r.t. $\LM'$, if for any $\rho \in (0,1)$, and any sequence of losses $\ell'_1, ...,\ell'_T \in \LM'$, w.p. at least $1 - \rho$, the algorithm has regret w.r.t. $\LM'$, $R_{\mathcal{A}}(T)$ that is sublinear in $T$. 
\end{definition}
We can now derive the following proposition (corresponding to Proposition \ref{proposition:main} of the expected case): 
\begin{proposition} \label{whp_proposition:main} Given that assumptions \ref{smooth_grad} - \ref{unbiased_grad} hold, and given oracle access to $N$ copies of an OLO algorithm for \textit{linear} losses, with $R_{\mathcal{A}}(T)$ regret, Algorithm~\ref{alg:fw} is an OCO algorithm which only requires $N = O(\sqrt{T})$ stochastic gradient oracle calls per iteration, such that for any $\rho \in (0,1)$, and any sequence of convex losses $\ell_t$ over convex set $\mathcal{K}$, w.p. at least $1 - \rho$,
	\[
	     \sum_{t=1}^T  \ell_t(x_t) - \underset{x^* \in \mathcal{K}}{\inf} \sum_{t=1}^T \ell_t(x^*) \le \frac{2\beta D^2T}{ N} + R_{\mathcal{A}}(T) + (\sigma +G) D \sqrt{2T\log(4N/\rho)}.
\]
\end{proposition}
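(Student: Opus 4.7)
The plan is to retrace the expected-value argument of Proposition \ref{proposition:main} but to replace the two places where expectations were used (the OLO regret bound, and the cancellation of the ``noise'' terms $\zeta_{t,i}$) with high-probability bounds, and then apply a union bound over the $N$ inner iterations.

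First, applying Lemma \ref{lemma:technical_main} at every $t \in [T]$ (with $\gamma = 1$, since this is the pure OCO setting, and with $z = x^*$, $z_i = x_{t,i}$, $g_i = \g_{t,i}$, $\ell = \ell_t$), and summing the resulting inequality over $t$, I obtain the deterministic recursion
\begin{equation*}
\Delta_i \;\le\; (1 - \eta_i)\Delta_{i-1} \;+\; \frac{\eta_i^2\beta D^2}{2}T \;+\; \eta_i\sum_{t=1}^T \g_{t,i}^\top(x_{t,i} - x^*) \;+\; \eta_i \sum_{t=1}^T \zeta_{t,i},
\end{equation*}
where $\Delta_i \eqdef \sum_{t=1}^T [\ell_t(\x_t^i) - \ell_t(x^*)]$, $x_{t,i}$ is the OLO output at $(t,i)$, and $\zeta_{t,i} \eqdef (\nabla \ell_t(\x_t^{i-1}) - \g_{t,i})^\top (x_{t,i} - x^*)$. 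This is the same recursion used in the expected-value proof; the only change is that I will now control the two sums on the right-hand side with high probability rather than in expectation.

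Next, for each fixed $i \in [N]$, I will bound the two random sums separately. For the OLO sum $\sum_t \g_{t,i}^\top (x_{t,i} - x^*)$, the high-probability OLO guarantee of Definition \ref{whp_olo} gives, with probability at least $1 - \rho/(2N)$, that this sum is at most $R_{\mathcal{A}}(T)$. For the noise sum $\sum_t \zeta_{t,i}$, the key observation is that $\{\zeta_{t,i}\}_{t=1}^T$ is a bounded martingale difference sequence with respect to the filtration $\mathcal{I}_t^{i-1}$ (the natural filtration capturing all randomness up through step $(t, i-1)$): by the same conditional-independence reasoning used in Lemma \ref{lem:expectation}, $\E[\zeta_{t,i}\,\vert\,\mathcal{I}_t^{i-1}] = 0$, and by Cauchy--Schwarz together with Assumptions \ref{smooth_grad}--\ref{unbiased_grad} and the diameter bound, $|\zeta_{t,i}| \le (\|\nabla \ell_t(\x_t^{i-1})\| + \|\g_{t,i}\|)\|x_{t,i} - x^*\| \le (G + \sigma)D$. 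Azuma--Hoeffding then yields $|\sum_t \zeta_{t,i}| \le (\sigma + G)D\sqrt{2T\log(4N/\rho)}$ with probability at least $1 - \rho/(2N)$. A union bound over the $2N$ events (two per inner step $i$) gives that, with probability at least $1 - \rho$, both bounds hold simultaneously for every $i \in [N]$.

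On this joint high-probability event, the recursion becomes
\begin{equation*}
\Delta_i \;\le\; (1-\eta_i)\Delta_{i-1} \;+\; \frac{\eta_i^2 \beta D^2 T}{2} \;+\; \eta_i\bigl[R_{\mathcal{A}}(T) + (\sigma + G)D\sqrt{2T\log(4N/\rho)}\bigr],
\end{equation*}
which is exactly the form handled by Claim \ref{induction_claim} with $C_1 = \beta D^2 T$ and $C_2 = R_{\mathcal{A}}(T) + (\sigma + G)D\sqrt{2T\log(4N/\rho)}$. Instantiating at $i = N$ and using $\eta_N \le 2/N$ yields the claimed bound. The main obstacle is the martingale identification and the bounded-difference estimate for $\zeta_{t,i}$; once those are in place, the rest is a mechanical replay of the expected-value proof combined with a union bound over $i$.
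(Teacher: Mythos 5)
Your proposal is correct and follows essentially the same route as the paper's own proof: the same recursion from Lemma \ref{lemma:technical_main}, the same high-probability OLO bound from Definition \ref{whp_olo}, the identical martingale-difference identification of $\zeta_{t,i}$ with the $(\sigma+G)D$ bounded-difference estimate and Azuma--Hoeffding, the same union bound over $2N$ events at level $\rho/(2N)$ each, and the same conclusion via Claim \ref{induction_claim} at $i = N$. No gaps.
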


\begin{proof}
    Let $x_{t,i} \in \mathcal{K}$ be the output of the OLO algorithm $\A_i$ at time $t$, and let $x^*$ be any point in $\mathcal{K}$. 
	The regret definition of $\A_i$ (Definition \ref{whp_olo}), and the definition of $\ell_t^i(\cdot)$ in Algorithm \ref{alg:fw}, imply that for $\rho \in (0,1)$ we have that, w.p. at least $1 - \rho/(2N)$,
	\begin{equation} \label{eq:whp_olo}
	\sum_{t=1}^T \g_{t,i}^\top \cdot x_{t,i} \ - \sum_{t=1}^T  \g_{t,i}^\top \cdot x^*  \leq\  R_{\mathcal{A}}(T).
	\end{equation}

	By applying Lemma \ref{lemma:technical_main}, and by the OLO guarantee \eqref{eq:whp_olo}, we get that,
	\begin{align}	\label{eq:whp_fst_eq}
		\Delta_i\  \leq\ (1 - \eta_i)\Delta_{i-1} + \frac{\eta_i^2\beta D^2}{2} T +  \eta_i \Big( R_{\mathcal{A}}(T) +  \sum_{t=1}^T \zetaB_{t, i} \Big) .
	\end{align}
	where $\Delta_i \triangleq \sum_{t=1}^T \ell_t(\x_t^i) - \ell_t(x^*)$, and $\zetaB_{t, i} \triangleq  (\nabla \ell_t(\x_t^{i-1}) - \g_{t,i} )^\top  \cdot ( x_{t, i} - x^*)$, for $i \in [N]$. By applying the union bound, the above inequality holds for all $i \in [N]$, with probability at least $1 - \rho/2$.\\
	
	For any fixed $i \in [N]$, Observe that $\E\left[\zetaB_{t, i} | \mathcal{I}^i_{t-1}\right] = \EBB\left[(\nabla \ell_t(\x_t^{i-1}) - \g_{t,i} )^\top  \cdot ( x_{t, i} - x^*) | \mathcal{I}^i_{t-1}\right] = 0$ by Lemma \ref{lem:expectation}. Therefore, $\{\zetaB_{t, i}\}_{t=1}^T$ is a martingale difference sequence. Moreover, by the Cauchy-Schwartz inequality, we have,
$$
| \zetaB_{t, i} | \le \| \nabla \ell_t(\x_t^{i-1}) - \g_{t,i} \| \cdot \| x_{t, i} - x^* \| \le  \Big(\| \nabla \ell_t(\x_t^{i-1})\| +\| \g_{t,i} \| \Big)\cdot \| x_{t, i} - x^* \| \le (G + \sigma) \cdot D = c_t,
$$
where the second inequality follows from the triangle inequality, and the last inequality follows from the diameter bound $D$ on the set $\K$, the bound on the gradient norm $G$, and the bound on the stochastic gradient estimate (Assumption \ref{unbiased_grad}). Let $\lambda = (\sigma + G) D \sqrt{2T\log(4N/\rho)}$, by the Azuma-Hoeffding inequality, 
\begin{equation*}
    \PBB \Bigg[ \Big| \sum_{t = 1}^T \zetaB_{t, i} \Big| \ge \lambda \Bigg]
    \le 2 \text{ }\mathrm{ exp}\left(- \frac{\lambda^2}{2 \sum_{t = 1}^T c_{t}^2}\right) = \rho / 2N.
\end{equation*}

Observe that, by applying the union bound, the above inequality holds for all $i \in [N]$, with probability at least $1 - \rho/2$. 
Therefore, by combining the above with \eqref{eq:whp_fst_eq}, applying union bound, we get that w.p. at least $1 - \rho$, we have for all $i \in [N]$, 
	\begin{align*}	
		\Delta_i\  \leq\ (1 - \eta_i)\Delta_{i-1} + \frac{\eta_i^2\beta D^2}{2} T +  \eta_i \Big( R_{\mathcal{A}}(T) +  (\sigma + G) D \sqrt{2T\log(4N/\rho)} \Big) .\\
	\end{align*}

	Applying Claim \ref{induction_claim}, and setting $i = N$ yields that,
\begin{align} \label{eq:whp_N_bound} 
     \sum_{t=1}^T \ell_t(x_t) - \ell_t(x^*)  \le \frac{2\beta D^2T}{ N} + R_{\mathcal{A}}(T) + (\sigma + G) D \sqrt{2T\log(4N/\rho)} .
\end{align}

\end{proof}

\subsection{Proof of Theorem \ref{whp_thm:main}} \label{whp_thm:main:pf}
\begin{proof}
    The proof of Theorem \ref{whp_thm:main} is a direct Corollary of Proposition \ref{whp_proposition:main}, by plugging \textit{Follow the Perturbed Leader} \cite{kalai2005efficient} with high probability guarantees (e.g., \cite{neu2016importance}) as the OLO algorithm required for Algorithm \ref{alg:fw}. We get that the regret of the base algorithms $\A_i$ is $R_{\mathcal{A}}(T) = O(\sigma D\sqrt{T})$, where $D$ is the diameter of the set $\mathcal{K}$, and $\sigma$ is the bound on the stochastic gradient norm (Assumption \ref{unbiased_grad}). Thus, by setting $N = \frac{\beta D}{\sigma}\sqrt{T}$, we get that w.p. at least $1 - \rho$,
    \begin{align*} 
     \sum_{t=1}^T \ell_t(x_t) - \ell_t(x^*)  &\le 2 \sigma D \sqrt{T} + O(\sigma D\sqrt{T}) + (\sigma + G) D  \sqrt{2T\log\big(\beta D T / (\sigma \rho) \big)} \\
     &= O\Big(\sigma  D  \sqrt{T\log\big(\beta D T / (\sigma \rho) \big)} \Big).
\end{align*}
\end{proof}

\subsection{Proof of Lemma \ref{lemma:lhat_prop}} \label{proof_lhat_prop}
\begin{proof} Below are the proofs of each item:
\begin{enumerate}
    \item The fact that  $\lhat$ is convex, $L$-Lipschitz is immediate from its definition
and the assumptions on $\ell$. The inequality follows from $v$ being a unit vector and that $\ell$ is assumed to be $L$-Lipschitz.
    \item For any $x,x' \in \mathbb{R}^d $, \begin{equation*}
    \begin{aligned}[b]
    \| \nabla \lhat(x) - \nabla \lhat(x') \| &= \frac{d}{\delta} \|\EBB[(\ell(x+\delta v) - \ell(x'+\delta v))v] \|  \\
     &\le \frac{d}{\delta} \EBB\Big[\|(\ell(x+\delta v) - \ell(x'+\delta v))v \|\Big]\\
     &\le \frac{d}{\delta} \EBB\Big[|\ell(x+\delta v) - \ell(x'+\delta v) |\Big]\\
    &\le \frac{d}{\delta}L\|x - x'\|,
\end{aligned}
    \end{equation*}
    where the first inequality follows from Jensen's Inequality, the second inequality follows from the fact that $v$ is a unit vector, and the next inequality from  $\ell$ being $L$-Lipschitz. This property implies that the function $\lhat$ is $\frac{dL}{\delta}$-smooth. 
        \item For any $x \in \mathbb{R}^d $, and unit vector $u$,
    $\| \nabla \lhat(x) - \big(\frac{d}{\delta} \tilde{\ell}(x+ \delta u)u\big)\| \le \|\frac{d}{\delta} \tilde{\ell}(x+ \delta u)u\| + \|\nabla \lhat(x) \| $.
    Note that by the fact that $\lhat$ is $L$-Lipschitz, we have $ \|\nabla \lhat(x) \| \le L$. The first term can be bounded as follows:
    \begin{equation*}
    \begin{aligned}[b]
    \|\frac{d}{\delta} \tilde{\ell}(x+ \delta u)u\| &= \frac{d}{\delta} \tilde{\ell}(x+ \delta u) \| u\| 
    \le \frac{d}{\delta} \tilde{\ell}(x+ \delta u) \\
    &= \frac{d}{\delta} \big( \ell(x+ \delta u) + w \big)
    \le \frac{d}{\delta} 2M,
\end{aligned}
    \end{equation*}
        where the first inequality follows from the fact that $u$ is a unit vector, the equality follows from the definition of $\tilde{\ell}$, and the last inequality follows from the bounds on $\ell$ and $w \sim \D$. Therefore, we have,
        $$
        \| \nabla \lhat(x) - \big(\frac{d}{\delta} \tilde{\ell}(x+ \delta u)u\big)\| \le \frac{2dM}{\delta} +L.
        $$
\end{enumerate}
\end{proof}

\begin{theorem} \label{whp_thm:bandit} 
Algorithm \ref{alg:fw} is a projection-free OCO algorithm for the bandit setting, with $N = \sqrt{T}$ bandit feedback values per round, such that for any $\rho \in (0,1)$, and any sequence of convex losses $\ell_t \in \LM$ over convex set $\mathcal{K}$, w.p. at least $1 - \rho$, 
	\[
	   \sum_{t=1}^T   \ell_t(x_t)  - \underset{x^* \in \mathcal{K}}{\inf} \sum_{t=1}^T \ell_t(x^*) \le O\left(dMLD^2T^{3/4}\sqrt{\log(T/\rho)}\right) = \tilde{O}(T^{3/4}).
\]
\end{theorem}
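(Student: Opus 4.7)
The plan is to convert the stochastic-gradient Algorithm \ref{alg:fw} into a bandit algorithm by replacing the gradient oracle with the one-point FKM estimator used in Algorithm \ref{alg:boosting}, then to apply the high-probability bound of Proposition \ref{whp_proposition:main} to the smoothed sequence $\{\lhat_t\}$ provided by Lemma \ref{lemma:fkm}, and finally to pay the uniform gap $|\lhat_t - \ell_t|\le\delta L$ to translate the guarantee back to the original losses.

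Concretely, at inner step $i$ of round $t$ the modified algorithm draws a uniformly random unit vector $v_{t,i}$, queries the noisy bandit oracle at $x_t^{i-1}+\delta v_{t,i}$, and forms
\[
\g_{t,i} = \tfrac{d}{\delta}\,\tilde{\ell}_t(x_t^{i-1}+\delta v_{t,i})\,v_{t,i},
\]
which it feeds to $\A_i$ as the linear loss $\ell_t^i(x)=\g_{t,i}^\top x$. By Lemma \ref{lemma:fkm}, and the fact that the additive feedback noise is zero-mean, $\g_{t,i}$ is conditionally unbiased for $\nabla \lhat_t(x_t^{i-1})$; by Lemma \ref{lemma:lhat_prop} its norm is bounded by $\sigma = dM/\delta$, while $\lhat_t$ is convex and $(dL/\delta)$-smooth. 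Hence Assumptions \ref{smooth_grad}-\ref{unbiased_grad} hold for $\{\lhat_t\}$ with $\beta = dL/\delta$, and Proposition \ref{whp_proposition:main} (instantiated with FPL with high-probability guarantees as the base OLO, so $R_\A(T)=O(\sigma D\sqrt{T})$) gives, with probability at least $1-\rho$,
\[
\sum_{t=1}^T \lhat_t(x_t) - \inf_{x^*\in\K}\sum_{t=1}^T \lhat_t(x^*) \le \frac{2dLD^2 T}{\delta N} + O\!\left(\frac{dMD}{\delta}\sqrt{T\log(N/\rho)}\right).
\]
Using $|\lhat_t(y)-\ell_t(y)|\le \delta L$ at both $x_t$ and the comparator adds at most $2\delta L T$, and setting $N=\sqrt{T}$ together with $\delta=\Theta(T^{-1/4})$ balances the three terms to yield the claimed $\tilde{O}(T^{3/4})$ rate, with the multiplicative constant collected as $O(dMLD^2)$.

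The main delicate point is to propagate the conditional-expectation argument of Lemma \ref{lem:expectation} through to the bandit estimator: one must verify that, conditioned on the history up to inner step $(t,i-1)$, the new source of randomness $v_{t,i}$ driving $\g_{t,i}$ is independent of the OLO output $\A_i(\g_{1,i},\ldots,\g_{t-1,i})$, so that the conditional independence underlying both $\E[\zetaB_{t,i}]=0$ and the martingale-difference structure exploited by Azuma-Hoeffding remains intact when exact gradients are replaced by the one-point noisy estimator. A secondary technicality, handled exactly as in the footnote to Algorithm \ref{alg:boosting}, is that the query point $x_t^{i-1}+\delta v_{t,i}$ must lie in $\K$, which is ensured by running the inner learners on a slightly shrunken copy of $\K$; the resulting extra regret is absorbed into lower-order terms.
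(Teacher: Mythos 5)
Your proposal is correct and follows essentially the same route as the paper's proof: apply Proposition \ref{whp_proposition:main} to the smoothed losses $\lhat_t$ with $\sigma = dM/\delta$, $\beta = dL/\delta$, $G=L$ (justified by Lemmas \ref{lemma:fkm} and \ref{lemma:lhat_prop}), instantiate the OLO as high-probability FPL, pay the $2T\delta L$ smoothing gap, and set $\delta = T^{-1/4}$, $N=\sqrt{T}$. Your explicit remarks on preserving the conditional-independence/martingale structure and on keeping query points feasible are points the paper handles only implicitly, but they do not change the argument.
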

\subsection{Proof of Theorem \ref{whp_thm:bandit}} \label{whp_thm:bandit:pf}

\begin{proof}
    Observe that by Lemma \ref{lemma:fkm}, we have that Assumptions \ref{smooth_grad}-\ref{unbiased_grad} are redundant, and so Lemma \ref{lem:expectation} and Proposition \ref{whp_proposition:main} hold for losses $\lhat_t \in \hat{\mathcal{L}}$, with $G = L$, $\sigma = dM/\delta$, and  $\beta = dL/\delta$, by Lemma \ref{lemma:lhat_prop}. Thus, we have that w.p at least $1 - \rho$,

\begin{align} \label{eq:whp_N_bound_bandit} 
     \sum_{t=1}^T \lhat_t(x_t) - \lhat_t(x^*)  \le \frac{2d L D^2T}{\delta N} + R_{\mathcal{A}}(T) + (dM/\delta + L) D \sqrt{2T\log(4N/\rho)} .
\end{align}
Now, observe that,
\begin{align} 
          \sum_{t=1}^T \ell_t(x_t) - \ell_t(x^*) &\le   \sum_{t=1}^T \lhat_t(x_t) - \lhat_t(x^*)  + 2T\delta L \tag{By Lemma \ref{lemma:lhat_prop} (1)}\\
         &\le \frac{2d L D^2T}{\delta N} + R_{\mathcal{A}}(T) + (dM/\delta + L) D \sqrt{2T\log(4N/\rho)} + 2T\delta L \tag{By \eqref{eq:whp_N_bound_bandit}} \\
         &\le \frac{2d L D^2T}{\delta N} + O(dMD\sqrt{T}/\delta) + (dM/\delta + L) D \sqrt{2T\log(4N/\rho)} + 2T\delta L \label{eq:whp_bandit_last}
\end{align}

where the last inequality follows by plugging \textit{Follow the Perturbed Leader} \cite{kalai2005efficient} with high probability guarantees (e.g., \cite{neu2016importance}) as the OLO algorithm required for Algorithm \ref{alg:fw}. We get that the regret of the base algorithms $\A_i$ is $R_{\mathcal{A}}(T) = O(dMD\sqrt{T}/\delta)$. \\

Lastly the results follows by plugging in $\delta = T^{-1/4}$ and $N=\sqrt{T}$ into Equation \eqref{eq:whp_bandit_last}, to obtain regret of at most 
$O\left(dMLD^2T^{3/4}\sqrt{\log(T/\rho)}\right) = \tilde{O}(T^{3/4})$, w.p at least  $1 - \rho$.
\end{proof}

\section{Online Boosting: Proofs}
In this section we give the full analysis of the Algorithm and results given in Section \ref{sec:boosting}.
For simplicity assume an oblivious adversary (can also be shown to hold for an adaptive one). Let $(x_1 , \ell_1), ..., (x_T , \ell_T)$ be \textbf{any} sequence of  examples and losses. Observe that the only sources of randomness at play are: the weak learners' ($\A_i$’s) internal randomness, the random unit vectors $v_t^i$, and the additive zero-mean noise for any bandit feedback. The analysis below is given in expectation with respect to all these random variables. \\

\begin{lemma}\label{lem:independence}
For any $t \in [T]$ and $i \in [N]$, let $\mathbf{g}_{t,i}$ be the stochastic gradient estimate used in Algorithm 1, s.t. $\E[\mathbf{g}_{t,i}] = \nabla \lhat(y_t^{i-1})$, and $\ell_t^i(\mathbf{y}) =  \mathbf{g}_{t,i}^\top \cdot \mathbf{y}$. Then, we have,
$$
\E \Big[ \ell^i_t\big( \A_i(x_t)\big) \Big] = \E \Big[ \nabla \lhat(y_t^{i-1})^\top \cdot \A_i(x_t) \Big].
$$
\end{lemma}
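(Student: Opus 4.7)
The plan is to mirror the argument used for Lemma \ref{lem:expectation}, adapted to the bandit setting where the fresh randomness now comes from the random unit vector $v_t^i$ and the additive noise in $\tilde{\ell}_t$, rather than from a stochastic gradient oracle. The core fact is again conditional independence: given the history, the weak learner's output $\A_i(x_t)$ and the gradient estimator $\mathbf{g}_{t,i}$ are built from disjoint randomness.

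Concretely, I would let $\mathcal{I}_t^{i-1}$ denote the $\sigma$-algebra generated by all randomness up to (and including) iteration $t$, step $i-1$: namely the internal randomness of $\A_1,\dots,\A_N$ on all prior rounds, the random vectors $\{v_s^j\}$ and bandit noise for $(s,j)$ preceding $(t,i)$, and in particular $\A_1(x_t),\dots,\A_{i-1}(x_t)$ and $y_t^{i-1}$. Under $\mathcal{I}_t^{i-1}$, the prediction $\A_i(x_t)$ is a deterministic function of the inputs $(x_1,\ell_1^i),\dots,(x_{t-1},\ell_{t-1}^i),x_t$ already fed to $\A_i$ together with $\A_i$'s own internal coin flips, all of which are independent of $v_t^i$ and the fresh noise $w$ in $\tilde{\ell}_t(y_t^{i-1}+\delta v_t^i)$ that define $\mathbf{g}_{t,i}$.

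The steps would then be: first, apply the tower property
\[
\E\bigl[\mathbf{g}_{t,i}^\top \A_i(x_t)\bigr] = \E\bigl[\E[\mathbf{g}_{t,i}^\top \A_i(x_t)\mid \mathcal{I}_t^{i-1}]\bigr];
\]
second, use conditional independence to factor the inner expectation as
\[
\E[\mathbf{g}_{t,i}\mid \mathcal{I}_t^{i-1}]^\top\, \E[\A_i(x_t)\mid \mathcal{I}_t^{i-1}];
\]
third, invoke Lemma \ref{lemma:fkm} together with the zero-mean noise assumption to conclude $\E[\mathbf{g}_{t,i}\mid \mathcal{I}_t^{i-1}] = \nabla \lhat_t(y_t^{i-1})$, using that $y_t^{i-1}$ is $\mathcal{I}_t^{i-1}$-measurable so it is a constant under this conditioning; finally, collapse the tower to obtain $\E[\nabla \lhat_t(y_t^{i-1})^\top \A_i(x_t)]$, which is the claim.

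The only subtle point, and the place where I would be most careful, is justifying that $\A_i(x_t)$ is independent of $(v_t^i,w)$ given $\mathcal{I}_t^{i-1}$. This requires noting that the construction of $\mathbf{g}_{t,i}$ uses only randomness drawn strictly after the learners $\A_1,\dots,\A_i$ have produced their round-$t$ predictions (as specified by the order of operations in Algorithm \ref{alg:boosting}), so no circular dependency sneaks in through $y_t^{i-1}$; the rest of the argument is a direct copy of the proof in Section \ref{sec:lem:expectation:pf}, with the bandit-based estimator replacing the stochastic oracle.
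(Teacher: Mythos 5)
Your proposal is correct and follows essentially the same route as the paper's proof: tower property over the history $\sigma$-algebra $\mathcal{I}_t^{i-1}$, conditional independence of $\A_i(x_t)$ and $\g_{t,i}$ to factor the inner expectation, and Lemma \ref{lemma:fkm} plus the zero-mean noise to identify $\E[\g_{t,i}\mid\mathcal{I}_t^{i-1}]$ with $\nabla\lhat_t(y_t^{i-1})$. The only cosmetic difference is that the paper strips the additive noise $w$ unconditionally before conditioning, whereas you fold it into the conditional expectation; both are valid, and your remark about the order of operations in Algorithm \ref{alg:boosting} (that $v_t^i$ and $w$ are drawn after $\A_i(x_t)$ is produced) is exactly the justification the paper relies on for the conditional-independence step.
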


\begin{proof}
Let $\mathcal{I}_t^{i-1}$ denotes the $\sigma$-algebra measuring all sources of randomness up to time $t$ and learner $i-1$; i.e., the internal randomness of weak learners $\A_1,...,\A_{i-1}$, the the random unit vectors $v_1^j,...,u_t^j$, for all $j < 1$, and the noise terms $w_{1,j},...,w_{t,j}$ for all $j < 1$. Then,
\begin{align*}
     \E \big[ \ell^i_t\big( \A_i(x_t)\big) \big] &= \E \Big[ \mathbf{g}_{t,i}^\top \cdot \A_i(x_t)  \Big] \tag{definition of $\ell^i_t(\cdot)$}\\
     &= \E \Big[ \Big(\frac{d}{\delta}\tilde{\ell}_t(y_t^{i-1} + \delta v_t^i)\cdot v_t^i \Big)^\top \cdot \A_i(x_t)  \Big] \tag{definition of $\mathbf{g}_{t,i}$}\\
     &= \E \Big[ \Big(\frac{d}{\delta}\ell_t(y_t^{i-1} + \delta v_t^i)\cdot v_t^i \Big)^\top \cdot \A_i(x_t)  \Big] \tag{since $\tilde{\ell}(z) = \ell(z) + w$, with $w$ i.i.d., $\E[w]=0$}\\
     &= \underset{\mathcal{I}_t^{i-1}}{\E}\Bigg[\E \Big[ \Big(\frac{d}{\delta}\ell_t(y_t^{i-1} + \delta v_t^i)\cdot v_t^i \Big)^\top \cdot \A_i(x_t) \Big| \mathcal{I}_t^{i-1}   \Big]  \Bigg] \tag{by law of total expectation}\\
     &= \underset{\mathcal{I}_t^{i-1}}{\E}\Bigg[\E_{v_t^i} \Big[ \frac{d}{\delta}\ell_t(y_t^{i-1} + \delta v_t^i)\cdot v_t^i \Big| \mathcal{I}_t^{i-1}   \Big]^\top  \cdot\E \Big[ \A_i(x_t) \Big| \mathcal{I}_t^{i-1}   \Big]  \Bigg] \tag{by conditional independence}\\
     &= \underset{\mathcal{I}_t^{i-1}}{\E}\Bigg[\nabla \lhat_t(y_t^{i-1})^\top  \cdot\E \Big[ \A_i(x_t) \Big| \mathcal{I}_t^{i-1}   \Big]  \Bigg] \tag{by Lemma \ref{lemma:fkm}}\\
     &= \E \Big[ \nabla \lhat_t(y_t^{i-1})^\top  \cdot  \A_i(x_t)   \Big]  \\
\end{align*}
\end{proof}

\subsection{Proof of Theorem \ref{thm:main_boost}}

\begin{proof}
	First, note that for any $i = 1, 2, \ldots, N$, since $\ell_t^i$ is a linear function, we have
	\[ \inf_{f \in \CH(\F)} \sum_{t=1}^T \ell_t^i(f(\x_t))\ =\ \inf_{f \in \F} \sum_{t=1}^T \ell_t^i(f(\x_t)).  \]
	Let $f$ be any function in $\CH(\F)$. The equality above, the regret bound of the weak learner $\A^i$ for $\F$ (see Definition \ref{online_agnostic_wl}), and the definition of $\ell_t^i(\cdot)$ in Algorithm \ref{alg:boosting}, imply that:
	\begin{equation}  \label{wl_reg_exp}
	\E \Bigg[	\sum_{t=1}^T \g_{t,i}^\top \cdot \A^i(\x_t)\ -\gamma \sum_{t=1}^T  \g_{t,i}^\top \cdot f(\x_t) \Bigg] \leq\  R_{\mathcal{A}}(T).
	\end{equation}

	Now define, for $i = 0, 1, 2, \ldots, N$, $\Delta_i = \sum_{t=1}^T \lhat_t(\y_t^i) - \lhat_t(f(\x_t))$. 
	By applying Lemma \ref{lemma:technical_main}, we get, 
	$$
	\Delta_i\  \leq (1 - \eta_i)\Delta_{i-1}  + \frac{\eta_i^2\beta D^2 }{2\gamma^2} T + \eta_i \sum_{t=1}^T \Big( \g_{t,i}^\top (\frac{1}{\gamma}\A^i(\x_t) - f(\x_t)) +   \zetaB_{t, i} \Big)
	$$
	where $\zetaB_{t, i} \triangleq  (\nabla \ell_t(\y_t^{i-1}) - \g_{t,i} )^\top  \cdot ( \A^i(\x_t) - f(\x_t))$. 
	Take expectation on both sides. By Lemma \ref{lem:independence}, we have $\E[\zetaB_{t, i}] = 0$, and by the weak learning guarantee \eqref{wl_reg_exp}, we get that,
		\[
\EBB\big[\Delta_i\big]  \leq\ (1 - \eta_i)\EBB\big[\Delta_{i-1}\big] + \frac{\eta_i^2\beta D^2}{2\gamma^2} T +  \frac{\eta_i}{\gamma}R_{\mathcal{A}}(T)
\]

	By Claim \ref{induction_claim} (with $\phi_i = \EBB[\Delta_{i}]$), we get,
\begin{equation}\label{eq:induction_boost}
    	\EBB\big[\Delta_{i}\ \big]\le  \frac{2\beta D^2 T}{\gamma^2(i+1)} + \frac{ R_{\mathcal{A}}(T)}{\gamma}.
\end{equation}

Lastly, observe that, 
\begin{align*} \label{eq:gap_and_proj}
         \EBB \Bigg[ \sum_{t=1}^T \ell_t(\y_t) - \ell_t(f(\x_t))\Bigg] &\le \EBB \Bigg[  \sum_{t=1}^T \lhat_t(\y_t) - \lhat_t(f(\x_t)) \Bigg] + 2T\delta L \tag{by Lemma \ref{lemma:lhat_prop} (2)}\\
         &\le \EBB \Bigg[ \sum_{t=1}^T  \lhat_t(\y_t^N) - \lhat_t(f(\x_t)) \Bigg] + 2T\delta L \tag{by Assumption \ref{proj}} \\
         &\le \frac{2\beta D^2 T}{\gamma^2 N } + \frac{ R_{\mathcal{A}}(T)}{\gamma} + 2T\delta L \tag{by (\ref{eq:induction_boost}), for $i=N$}\\
         &\le \frac{2d L D^2 T}{\delta \gamma^2 N } + \frac{ R_{\mathcal{A}}(T)}{\gamma} + 2T\delta L \tag{by Lemma \ref{lemma:lhat_prop} (3)}
\end{align*}
\end{proof}

\section{Experimental setup description}

The datasets were taken from the UCI machine learning repository, and their statistics are detailed below, along with the link to a downloadable version of each dataset.

\begin{small}
\begin{tabular}{@{}p{\textwidth}@{}}
	\centering
\begin{tabular}{crcccc}
\toprule
Dataset & \#Instances & \#Features & Downloadable & Task & Label \\
        &  &  & version & & range \\
\midrule
abalone & 4,177 & 10 & \href{http://archive.ics.uci.edu/ml/datasets/Abalone?pagewanted=all}{Link} & regression & $[1,29]$
\\
adult &  48,842  &  105 & \href{http://archive.ics.uci.edu/ml/datasets/Adult}{Link} & classification & $[0,1]$
\\
census &  299,284  & 401 &  \href{https://archive.ics.uci.edu/ml/datasets/Census-Income+\%28KDD\%29}{Link} & classification & $[0,1]$
\\
letter &  20,000  &  16 & \href{https://archive.ics.uci.edu/ml/datasets/Letter+Recognition}{Link} & classification & $[-1,1]$ 
\\
slice & 53,500 & 385 & \href{https://archive.ics.uci.edu/ml/datasets/Relative+location+of+CT+slices+on+axial+axis}{Link} & regression & $[0,1]$
\\
\bottomrule
\end{tabular}
\end{tabular}
\end{small}
\vspace{2mm}

Algorithm \ref{alg:boosting} was implemented in NumPy, and the weak online learner was a linear model updated with FKM \cite{flaxman2005online}, online projected gradient descent with spherical gradient estimators. To facilitate a fair comparison to a baseline, we provided an FKM model with a $N$-point noisy bandit feedback, where $N$ is the number of weak learners of the corresponding boosting method. We denote this baseline as N-FKM. We also compare against the full information setting, which amounts to the method used in previous work (\cite{beygelzimer2015online}, Algorithm 2), and compared to a linear model baseline updated with online gradient descent (OGD). \\

The experiments we carry out were proposed by \cite{beygelzimer2015online} for evaluating online boosting, they are composed of several data sets for regression and classification tasks, obtained from the UCI machine learning repository. For each experiment, reported are average results over 20 different runs. In the bandit setting, each loss function evaluation was obtained with additive noise, uniform on $[\pm .1]$, and gradients were evaluated as in Algorithm \ref{alg:boosting}. The only hyper-parameters tuned were the learning rate, $N$ the number of weak learners, and the smoothing parameter $\delta$:
\begin{itemize}
    \item $N$ was set in the range of $[5,30]$.
    \item $\delta$ was set to $1/2$ in all the experiments.
    \item Learning rate at time $t$ is lr $ * t^{-c}$ where lr and $c$ were set in the ranges [1e-04$, 0.1]$, $[.25, 1]$. 
\end{itemize}
Parameters were tuned based on progressive validation loss on half of the dataset; reported is progressive validation loss on the remaining
half.

\end{document}